\documentclass[10pt]{article}
\usepackage{amsmath}
\usepackage{amssymb}
\usepackage{amsthm}
\PassOptionsToPackage{x11names}{xcolor}
\usepackage[most]{tcolorbox}
\usepackage[preprint]{tmlr}

\usepackage{xspace}
\usepackage{ragged2e}

\DeclareRobustCommand{\refwarmstartassump}{\ref{assumption:warm-start}}
\DeclareRobustCommand{\refdelayedsec}{\ref{sec:apple-tasting-one-sided-delayed}}

\usepackage{amsmath}
\usepackage{amsfonts}
\usepackage{amsthm}
\usepackage{smile}
\usepackage{subfigure}
\usepackage{wrapfig}

\usepackage{tablefootnote}

\usepackage{hyperref}
\hypersetup{
  colorlinks=true,
  linkcolor=tmlrBlueDark,
  citecolor=tmlrCiteBlue,
  urlcolor=tmlrBlueDark
}
\usepackage{enumitem}
\usepackage{mathtools}
\usepackage{xcolor}
\definecolor{LightCyan}{rgb}{0.5, 0.65, 1}

\usepackage{colortbl}
\definecolor{LightCyan}{rgb}{0.8, 0.9, 1}

\usepackage{amssymb}
\usepackage{pifont}
\usepackage{fontawesome5}

\definecolor{snowblue}{RGB}{70, 140, 200}
\definecolor{flamered}{RGB}{220, 80, 50}

\newcommand{\methodcoldstart}{\text{Cold-Start UCB}}
\newcommand{\methodpretrained}{\text{Pre-trained Scorer}}

\newcommand{\actionadd}{\mathsf{add}}
\newcommand{\actionskip}{\mathsf{skip}}

\newcommand{\regret}{{\mathrm{Regret}}}

\ifdefined\final
\usepackage[disable]{todonotes}
\else
\usepackage[textsize=tiny]{todonotes}
\fi
\allowdisplaybreaks

\newcommand{\method}{\text{Warm-Start UCB}}

\newcommand{\datashort}{PerMemBench-S\xspace}
\newcommand{\datalong}{PerMemBench-L\xspace}

\newtcolorbox{promptbox}[1]{
    enhanced,
    breakable,
    colback=gray!8,
    colframe=gray!40,
    coltitle=white,
    colbacktitle=gray!55,
    fonttitle=\bfseries,
    title=#1,
    boxrule=0.5pt,
    arc=3pt,
    top=6pt,
    bottom=6pt,
    left=8pt,
    right=8pt,
    titlerule=0pt,
    left*=8pt,
}

\usepackage{soul}

\title{Learning Personalized Memory Gating from Delayed One-Sided Feedback}
\long\def\paperabstract{Long-term memory enables language-model agents to personalize future interactions, but a finite memory bank cannot preserve every extracted fact. Existing memory gating policies are typically shared across users and therefore fail to account for user-specific patterns of future reuse. 
In this paper, we study memory gating at the fact level. We first propose a retrieval-based labeling strategy that labels each fact by whether it is later retrieved and incorporated into responses. We then formulate online memory gating as contextual apple tasting with delayed one-sided feedback, and propose Warm-Start UCB, which combines offline learning from existing data with online adaptation to the current user.
Our theoretical analysis establishes a high-probability regret upper bound of $\tilde{O}(T\epsilon \wedge d\sqrt{T})$ when the delay window is at most $O(\sqrt{T})$, where $T$ is the total number of rounds, $\epsilon$ is the initialization error, $d$ is the dimension of the feature map, and $\tilde{O}(\cdot)$ hides constant and poly-logarithmic factors. The results of our experiment demonstrate that our method outperforms all baselines under different memory budgets and history lengths, establishing it as a promising personalized memory gating policy.}

\author{Kaixuan Ji$^{1,2}$, Zixiang Chen$^{1}$, Wenting Zhao$^{1}$, Cheng Qian$^{1,3}$, Heng Wang$^{1,3}$, Weizhi Zhang$^{1,4}$, Jielin Qiu$^{1}$, Liangwei Yang$^{1}$, Quanquan Gu$^{2}$, Silvio Savarese$^{1}$, Shelby Heinecke$^{1}$, Huan Wang$^{1}$\vspace{1.5mm}\\
$^1$Salesforce AI Research\\
$^2$University of California, Los Angeles\\
$^3$University of Illinois at Urbana-Champaign\\
$^4$University of Illinois Chicago \vspace{1mm}\\
}

\def\month{MM}
\def\year{YYYY}
\def\openreview{\url{https://openreview.net/forum?id=XXXX}}

\begin{document}

\maketitle


\section{Introduction}

Memory systems, which extract and store important information during human-AI interaction, have been widely deployed in large language model (LLM)-based agents~\citep{packer2023memgpt,zhong2024memorybank,chhikara2025mem0}. 
Typically, a memory system extracts candidate facts from each dialogue, stores a subset in a memory bank, and retrieves relevant entries for later queries. Because storage and context budgets are finite, the system must decide which facts are sufficiently valuable to preserve.

Existing systems generally make this decision using criteria shared across users, such as heuristics like whether a fact expresses a stable preference, profile attribute, or ongoing goal~\citep{park2023generative,zhong2024memorybank}, or using policies learned from long-context training data~\citep{yan2026memory,yu2026agentic,sun2026verifiable}. 
Nevertheless, such universal criteria overlook the heterogeneity across different users.
\citet{in2026personalize} formalized this issue at the session level and introduced PerMem-Bench, together with several LLM-based
storage-gating baselines. Their results show substantial headroom between perfect personalized gating and existing causal baselines, motivating methods that learn directly from user-specific interaction outcomes.

To learn directly from user-specific interaction outcomes, we first need to identify a feedback signal grounded in the interaction that reflects the utility of each fact. While a deployed system does not observe whether a discarded fact would have been useful, for a stored fact, later retrieval provides a natural signal of future utility: a fact is utilized if it is hit by the retrieval and incorporated into the response in some later conversation. This implies that \emph{whether a fact gets retrieved and confirmed by the LLM agent can serve as a flag of utility}, and therefore provides natural and reliable guidance for learning a memory gating policy.

The retrieval-based feedback has two outstanding features: the feedback is only available on those added facts, and it becomes meaningful only after some unknown delay. To capture this nature, we formally frame the personalized memory gating problem as an \emph{apple tasting with one-sided delayed feedback} problem~\citep{helmbold1992apple,vernade2020linear}. 
We further propose a two-stage learning strategy, $\method$. The first stage learns the utility pattern from data of existing users. The second stage adapts to user-specific patterns during online interaction via an upper confidence bound (UCB) algorithm.
Our theoretical analysis and empirical results verify the effectiveness of $\method$ as a gating policy. Our contributions are summarized as follows:
\begin{itemize}[leftmargin=*]
    \item We propose a labeling pipeline for learning personalized memory gating, which derives utility labels from downstream retrieval outcomes, refined through LLM-judge verification. This labeling strategy provides a direct feedback for memory gating learning, without relying on heuristic or downstream reward signals.
    \item We frame personalized memory gating as an \emph{apple tasting with one-sided delayed feedback} problem. 
    To solve this problem, we propose a two-stage learning strategy, $\method$.
    Our theoretical analysis implies that $\method$ effectively leverages the offline data while evolving towards user-specific patterns through online interaction.
    \item We empirically evaluate $\method$ on our benchmark across a range of memory budgets and interaction lengths. Experiment results show that $\method$ consistently outperforms previous LLM-based methods and the naive admit-all baseline, suggesting that $\method$ serves as an effective gating policy for personalized memory.
\end{itemize}

\paragraph{Notation.} We use $[n]$ to denote the set $\{1,\dots, n\}$. In this work, we use lowercase letters to represent scalars, and denote vectors and matrices by lower and uppercase boldface letters respectively. Given a vector $\xb\in \RR^d$, we denote the vector's $L_2$-norm by $\|\xb\|_2$. We further define $\|\xb\|_{\bSigma}=\sqrt{\xb^\top\bSigma\xb}$ given a positive semidefinite matrix $\bSigma \in \RR^{d\times d}$.
For two semidefinite matrices $\bSigma, \bLambda \in \RR^{d\times d}$, we use $\bSigma \succeq \bLambda$ if $\xb^\top (\bSigma - \bLambda)\xb \geq 0$ for any $\xb \in \RR^d$. We reserve $\Ib$ for the identity matrix.
For nonnegative sequences $\{x_n\}$ and $\{y_n\}$, we write $x_n = O(y_n)$ interchangeably with $x_n \lesssim y_n$ if $\limsup_{n\to\infty}{x_n}/{y_n} < \infty$, $y_n = \Omega(x_n)$ or $x_n \gtrsim y_n$ if $x_n = O(y_n)$, and $y_n = \Theta(x_n)$, or alternatively, $y_n \sim x_n$, if $x_n = O(y_n)$ and $x_n = \Omega(y_n)$. We further employ $\tilde{O}(\cdot), \tilde{\Omega}(\cdot)$, and $\tilde{\Theta}(\cdot)$ to hide $\polylog$ factors.  We use $\ind\{\cdot\}$ to denote the indicator function. For any $x, y \in \RR$, we denote $x \wedge y = \min\{x,y\}$ and $x \vee y = \max\{x,y\}$.

\section{Related Work}

\paragraph{Personalized Memory System.} Memory systems are widely used in various applications, including embodied systems~\citep{zhu2023ghost,wang2024jarvis}, personal assistants~\citep{wang2025agent,zheng2024synapse} and recommender systems~\citep{wang2024recmind}, incentivizing a long line of work in improving the effectiveness and efficiency. One direction mainly focuses on learning a policy that selectively extracts and persists crucial information from the interaction history~\citep{chhikara2025mem0,zhou2026mem1,yan2026memory,tan2025prospect}. Another direction focuses on leveraging various data structures to capture relation between memory units to improve subsequent retrieval~\citep{xu2026mem,yang2026plugmem,hu2025memory,rezazadeh2025isolated,lu2026choosing}. Despite these advances, most existing systems employ a universal storage policy and therefore fail to fully account for the heterogeneity across different users. \citet{in2026personalize} first formalize this limitation and introduce PerMem-Bench for evaluating personalized memory systems. They further propose several baseline approaches that operate the gating at session level, with LLM-based learning-free methods as the backbone. Compared to~\citet{in2026personalize}, we consider fact-level online adaptation from retrieval-derived feedback, which eliminates the reliance on human or LLM-based heuristic. This results in an apple tasting with one-sided delayed feedback formulation.

\paragraph{Learning with Apple Tasting Feedback.} Apple tasting, introduced by~\citet{helmbold1992apple,helmbold2000apple} is one of the canonical models for one-sided feedback, where the binary outcome is only revealed under the positive action. 
Apple tasting has been studied under several distinct comparators and modeling assumptions. In online binary classification against a hypothesis class, \citet{raman2024apple} characterize agnostic learnability through Littlestone dimension and establish a realizable-case trichotomy; \citet{chase25deterministic} study the additional restriction to deterministic learners.
Apple tasting also appears as a canonical partial-monitoring or feedback-graph problem when regret is measured against a fixed action~\citep{bartok2014partial,alon2015online,cesari2026two}, with rates depending on the precise loss, feedback, and adversary model. 
Contextual variants have been studied through partial monitoring with side information, Bayesian logistic models, feedback-graph reductions, and log-loss~\citep{bartok2012partial,grant2021apple,zhang2023practical,an2026efficient}. Extension has also been made to other settings like fairness-aware~\citep{bechavod2019equal,bechavod2023individually} and strategically manipulated context~\citep{harris2023strategic}.


\paragraph{Learning with Delayed Feedback.} There are long lines of work studying decision making problem with non-immediate feedback, the most canonical among which is that the delay is generated stochastically or adversarially independently of the feedback. In this setting, a regret comparable to the non-delay counterpart up to only additive terms of the delay is provably achievable under both multi-armed setting~\citep{joulani2013online,cesa2019delay,thune2019nonstochastic,bistritz2022no,zimmert2020optimal,gyorgy2021adapting,masoudian2022best,schlisselberg2026improved} and linear setting~\citep{zhou2019learning,howson2023delayed,van2023unified,blanchet2024delay,schlisselberg2026near}. 
Another line of work considers delayed feedback that is entangled with the feedback structure~\citep{lancewicki2021stochastic,tang2024stochastic,schlisselberg2025delay}, among which the most relevant to us are~\citet{vernade2017stochastic,vernade2020linear} that studied the running-zero-flipping-one delay model and achieved $\tilde{O}(d\sqrt{T} + md)$ regret upper bound~\citep{vernade2020linear}, where $d$ is the dimension of feature map and $m$ is a high-probability upper bound of the stochastic delay. In this paper, we adopt the running-zero-flipping-one delay model with apple-tasting feedback to account for the specific structure in agentic memory system, where feedback is only available at added memory entries.

\section{Dataset Construction}

In this section, we introduce a retrieval-based labeling pipeline for learning memory gating policy, and provide a quick analysis for how the utility patterns are shared across users and how they differ. We defer the meta-evaluation of label validity to Appendix~\ref{app:meta-evaluation} and statistics of the constructed benchmark to Appendix~\ref{app:benchmark-statistics}.

\subsection{Benchmark Construction}

Our construction begins with any corpus of user-LLM interaction history
between user $u$ and the LLM, denoted by
$\{\cS_1^{(u)}, \cS_2^{(u)}, \ldots\}$, such that each session is composed of $\cS_t^{(u)} = \{x_1^t, a_1^t, \cdots, x^t_k, a^t_k\}$, where $x_i^t$ is the user's $i$-th prompt and $a_i^t$ the agent's corresponding response in the $t$-th session of the entire interaction history between user $u$ and the agent. In this paper, we consider the corpus generated via the strategy in~\citet{in2026personalize}. Specifically, we adopt the pre-generated PerMem-Bench$^s$ (\datashort), and additionally synthesize a longer dataset (\datalong), to better capture the long-horizon nature of human-AI interaction.


\paragraph{Fact Extraction.}
For each user $u$, we process the sequence of session, $\{\cS_t \}_t$ in chronological order by feeding them into an off-the-shelf memory system\footnote{Here we use \texttt{mem0}~\citep{chhikara2025mem0}, though the pipeline is agnostic to the underlying extractor.}. 
Starting from an empty memory bank $\cM_0 = \varnothing$, for each session $\cS_t$, the memory system 
extracts a collection of candidate atomic facts $\cF_t = \{f_{t,1}, f_{t,2}, \ldots\}$ and then merges these candidate facts into the memory bank $\cM_t = \cM_{t-1} \cup \cF_t$. In plain words, $\cM_t$ is the memory bank before the $t+1$-th session, if we add all extracted facts and the memory bank has infinite budget.


\paragraph{Retrieval-Based Labeling.} After constructing the pool of candidate facts $\{\cM_t\}_t$, we now assign each candidate fact $f_{t,i}$ a binary \emph{utility} label indicating its usefulness.
Recall that a fact can only be useful if it gets retrieved at some later point. Formally, fix a fact $f_{t,i}$, for every user turn $x_{t',i'}$ with $t'>t$, an LLM first decides whether answering that turn requires querying memory at all. For turns judged to require memory, the same LLM formulates a natural-language query $q_{t'}$ and invokes the retriever of the memory system with query $q_{t'}$ against the memory bank state $\cM_{t'-1}$. $f_{t,i}$ is labeled with ``required for $x_{t',i'}$'' only if $f_{t,i}$ appears among the top-$k$ retrieval hits.

\paragraph{LLM-judge Verification.}  A retrieval hit alone is a noisy relevance signal: a fact may be topically related to $q_{t'}$ without actually being needed to answer it (e.g., a temporal mismatch). We therefore filter the obtained retrieval hits with an LLM judge. For each pair of $(x_{t',i'}, f_{t,i})$ such that $f_{t,i}$ hit by the retriever for $x_{t',i'}$, we ask an LLM to judge whether the presence of $f_{t,i}$ in the context is necessary for the answer to $x_{t',i'}$, or substantially improves the answer quality. 
We denote all hits verified by this procedure with
\begin{align*}
    \cH_0 = \{(f_{t,i}, x_{t',i'}): \, f_{t,i} \text{ hit by the retriever for } x_{t',i'}\text{ and verified by the LLM judge}\},
\end{align*}
as all valid retrieval hits. Finally, to mitigate potential neglect of the retrieval system, for all $x_{t',i'}$ where the retrieval results in no strong match (i.e., the scores of all retrieved facts lie below a threshold), we employ an LLM to inspect whether there is a subset of facts from the top retrievals that collectively help with answering the user's prompt. We denote these prompt-fact pairs as $\cH_1 = \{(f_{t,i},x_{t',i'})\}$ and merge them with $\cH_0$ to obtain the final set of valid retrieval hits $\cH = \cH_0 \cup \cH_1$. The final utility label is given by $y_{t,i} = \ind \big\{ \exists \, (t',i'), (f_{t,i}, x_{t', i'}) \in \cH \big\}$.

\paragraph{Evaluation Protocol.} During evaluation, we consider a memory budget with a fixed budget size $B$. For each distinct fact $f$, the gating decides whether it should be added to the memory bank. If the decision is ``add'' and the bank is full, then the earliest entry in the bank will be evicted. Our main metric is retrieval-hit retention rate (RHR), which is the proportion of valid hits in $\cH$ that are realized under the fixed budget. In particular,
\begin{align*}
    & \cH' := \{(f_{t,i}, x_{t',i'}) \in \cH: f_{t, i} \text{ is still in the memory bank at } t'\}, \\
    & \text{RHR} := |\cH'|/|\cH|.
\end{align*}
At a high level, the RHR captures how well a policy preserves facts until the turn where they actually make a difference, given that the memory budget is limited. A miss of fact $f_{t,i}$ at $t'$ stems from one of two reasons: the gating policy drops $f_{t,i}$, or it adds too many non-required facts so that $f_{t,i}$ gets evicted too early. Hence, a policy that attains a high RHR usually adds the required facts to the memory bank and drops unnecessary ones. Besides this, we also consider the F1 score of a gating policy, by treating the storage decision itself as a binary classification problem against the ground truth labels.

\subsection{Meta Analysis}\label{sec:meta-analysis}

In this section, we provide a preliminary study of the heterogeneity and homogeneity among different users. To do this, for each fact, we predict the binary utility label using the TF-IDF representation of the fact content and a logistic regression classifier. We compare two evaluation regimes per user: (i) Self, a stratified $k$-fold cross-validation using only $u$'s own facts, and (ii) leave-one-user-out (LOUO), a single shared-vocabulary model trained on the pooled facts of the other 19 users and evaluated on $u$. 

\begin{table}[h]
\centering
\vspace{0.05in}
\caption{Self (within-user CV) vs.\ LOUO (fixed global model) TF-IDF + logistic
regression AUC for the fact-gating label, averaged over 20 users.}
\label{tab:gating-transfer}
\begin{tabular}{lccc}
\toprule
Dataset & Self AUC & LOUO AUC & Gap  \\
\midrule
\datashort      & 0.756 $\pm$ 0.047 & 0.622 $\pm$ 0.049 & 0.134 \\
\datalong & 0.724 $\pm$ 0.029 & 0.665 $\pm$ 0.024 & 0.059 \\
\bottomrule
\end{tabular}
\end{table}

The results are summarized in Table~\ref{tab:gating-transfer}. First, we observe that the LOUO model itself is far above random (AUC 0.622 and 0.665, respectively, vs. 0.5), meaning that a classifier that has never seen the held-out user's facts still recovers a considerable share of the signal: part of what makes a fact useful for future interactions is shared among users. Despite this, the self AUC exceeds the LOUO AUC prominently, indicating that a substantial, user-specific component of the gating rule does \emph{not} transfer under a single fixed global classifier. This coexistence of homogeneity and heterogeneity indicates that, to make the best use of the data, a policy should both learn from other users and adapt to the current one. This motivates the proposed methods, introduced in the next section.

\section{Algorithm and Analysis}\label{sec:apple-tasting-one-sided-delayed}

In this section, we introduce an online learning model that captures the structure of retrieval-based feedback, along with an algorithm for learning a memory-gating policy from this feedback.

\subsection{Problem Setup}

We model the personalized memory gating problem as a \emph{contextual bandit with apple-tasting, delayed, one-sided feedback} problem and describe the formal setup in this section. For simplicity, we fix a single user and omit the user subscript.


\subsubsection{Interaction Protocol}

At each round $t = 1, \ldots, T$, the learner observes a context $\xb_t \in \RR^d$ and chooses an action $a_t \in \{\actionadd, \actionskip\}$. Here, the context $\xb_t$ is a feature representation\footnote{For example, an embedding of the fact together with its surrounding session context.} of the candidate memory fact extracted at timestamp $t$, and the action $a_t$ is the memory system's gating decision: $\actionadd$ commits the fact to the memory bank, while $\actionskip$ discards it. The learner receives different feedback depending on the action taken. If $a_t =\actionskip$, the learner discards the fact and observes nothing at any future time. If $a_t = \actionadd$, the environment simultaneously samples a delay $D_t > 0$ from a fixed distribution $\cD$, independently, and at each subsequent time $t' > t$, the learner observes the censored conversion indicator
\begin{align*}
z_{t, t'} = y_t \cdot \ind\big\{ D_t \le t' - t \big\},
\end{align*}
where $y_t \in \{0,1\}$ is the actual label such that $y_t=1$ indicates that the fact is utilized by some future user turn, and $y_t=0$ indicates that the fact is never used. Moreover, $D_t$ is the number of rounds until the utilization is confirmed (e.g., the fact is retrieved for a later turn). The observation begins at $z_{t,t}=0$ and, if and only if $y_t=1$, flips to $1$ once the delay elapses. A persistent $0$ is ambiguous between a transient fact and a required fact whose confirming retrieval has not yet occurred. 


\begin{remark}
This one-sided feedback structure, where the outcome of a decision is only ever revealed when the learner chooses the positive action, is classically termed \emph{apple tasting}\footnote{Consider a shopper deciding whether to buy an apple from a bin: they only learn whether it was good or bad by tasting (buying) it, never by leaving it on the shelf.}. Decision-making problems with such a feedback structure have been widely studied in the previous literature~\citep{helmbold1992apple,helmbold2000apple}. Our problem can be seen as a combination of the apple-tasting feedback and the delayed running-zero-flipping-one feedback considered in \citet{vernade2020linear}.
\end{remark}


We assume the learner has some knowledge of the delay distribution.

\begin{assumption}\label{assumption:delay}
    There exist some known $m$ and $\kappa_m = \Omega(1)$ such that $\PP[D_t \leq m] = \kappa_m$, for any $t \in [T]$.
\end{assumption}

Furthermore, we assume $y_t \sim \operatorname{Bernoulli} (\eta(\xb_t))$, where $\eta: \cX \to [0,1]$ is a user-specific function and $\eta(\xb_t)$ is the utility score. A larger $\eta(x)$ means a higher probability that the information in $x$ is used in the future. We assume that there exists some unknown $\btheta^* \in \RR^d$, such that $\eta(\xb_t) = \langle \btheta^*, \xb_t\rangle \in [0,1]$, where the features admit the following regularity conditions.

\begin{assumption}\label{assumption:linear-regularity}
    There exists some known constant $L > 0$ such that $\|\xb_t\|_2 \le L$ for all $t$.
\end{assumption}

\subsubsection{Learning Objective}
With store-gain $b > 0$ and asymmetric error costs $c_{\mathrm{miss}}, c_{\mathrm{store}} > 0$, the reward is
\begin{align*}
r(\xb_t, a_t, y_t) =
\begin{cases}
+b & a_t = \actionadd,\ y_t = 1, \\
-c_{\mathrm{store}} & a_t = \actionadd,\ y_t = 0, \\
-c_{\mathrm{miss}} & a_t = \actionskip,\ y_t = 1, \\
0 & a_t = \mathsf{skip},\ y_t = 0.
\end{cases}
\end{align*}
Without loss of generality, we assume $b + c_{\mathrm{store}} + c_{\mathrm{miss}} = 1$. Taking expectations over $y_t \sim \operatorname{Bernoulli}(\eta(\xb_t))$, the action $\mathsf{add}$ is preferred if and only if $\eta(\xb_t) \ge \tau$, with $\tau = c_{\mathrm{store}}$,
so the oracle is the threshold policy $\pi^\star(\xb) = \mathbf{1}\{ \langle \btheta^\star, \xb \rangle \ge \tau \}$. The learner's goal is to minimize the pseudo-regret against $\pi^\star$,
\begin{align*}
\mathrm{Regret}(T) = \sum_{t=1}^{T} \Big( & \EE_y\big[ r(\xb_t, \pi^\star(\xb_t), y)  -  r(\xb_t, a_t, y) \big] \Big).
\end{align*}

\begin{remark}
While the interaction protocol captures two key features of feedback from retrieval-based labeling, apple-tasting and one-sided delay, we notice that our online learning model still differs from practice in several ways. For example, within all retrieved and verified facts, their utility still varies: some are rarely retrieved and incorporated, whereas others are used more frequently. Also, the cost $c_{\mathrm{store}}$ for adding a non-required fact may vary over time: it will be higher when the memory bank is nearly full and lower when it has more capacity. Extending the model to account for these variations, and developing algorithms for the resulting settings, are promising directions for future work.
\end{remark}

\subsection{Training from Other Profiles}
\label{sec:pretraining}

The meta-analysis in Section~\ref{sec:meta-analysis} suggests that users' storage-preference parameters contain both a shared component and a user-specific component. This motivates using data from other users to construct an informative warm start for the current user. We formalize the quality of this warm start through the following high-probability assumption.

\begin{assumption}\label{assumption:warm-start}
The learner is given a prior center $\bmu_0\in\RR^d$, a positive-definite matrix $\bSigma_0\succeq\Ib$, and a known radius $\epsilon > 0$ such that 
\begin{align*}
    \PP\left(
        \|\btheta^\star-\bmu_0\|_{\bSigma_0}
        \leq \epsilon
    \right)
    \geq 1-\delta_0.
\end{align*}
Here, the probability is taken over both the current user's parameter $\btheta^\star$ and the randomness in the pretraining data used to construct $\bmu_0$ and $\bSigma_0$. 
\end{assumption}

Overall, Assumption~\ref{assumption:warm-start} states that the learner, through techniques like linear regression on other existing users, has access to a sufficiently accurate estimate of the center of the prior distribution of the per-user ground truth $\btheta^*$, which is close to any realized $\btheta^*$ with high probability. In fact, Assumption~\ref{assumption:warm-start} is plausible given the meta-analysis in Section~\ref{sec:meta-analysis} and standard analyses of random-design linear regression~\citep{hsu2014random}. In particular, let $\overline{\btheta}$ denote a population-level parameter shared across users. The transferability observed in Section~\ref{sec:meta-analysis} suggests that the current user's parameter is concentrated around this common component. Therefore, it is reasonable to make the assumption that, with probability at least $1-\delta_{\mathrm{pop}}$,
\begin{align}
    \|\btheta^\star-\overline{\btheta}\|_{\bSigma_0}
    \leq \epsilon_{\mathrm{pop}}.
    \label{eq:population-concentration}
\end{align}
At the same time, the cross-user predictability demonstrated in Section~\ref{sec:meta-analysis} suggests that data from other users can be used to estimate the shared component. Standard random-design linear-regression results, such as \citet[Theorem~2]{hsu2014random}, can be used to establish that, with probability at least $1-\delta_{\mathrm{est}}$,
\begin{align}
    \|\bmu_0-\overline{\btheta}\|_{\bSigma_0}
    \leq \epsilon_{\mathrm{est}}.
    \label{eq:population-estimation}
\end{align}
Combining \eqref{eq:population-concentration} and \eqref{eq:population-estimation} with the triangle inequality gives $\|\btheta^\star-\bmu_0\|_{\bSigma_0} \leq \epsilon_{\mathrm{pop}}+\epsilon_{\mathrm{est}}$ with probability at least $1-\delta_{\mathrm{pop}}-\delta_{\mathrm{est}}$. Thus, Assumption~\ref{assumption:warm-start} holds with $\epsilon = \epsilon_{\mathrm{pop}}+\epsilon_{\mathrm{est}}$ and $\delta_0 = \delta_{\mathrm{pop}}+\delta_{\mathrm{est}}$. We refer to Appendix~\ref{app:discuss-assumption-pretrain} for a detailed discussion.

\subsection{Online Adaptation}

\begin{algorithm}[t]
    \caption{Warm-Started Ridge UCB with One-Sided Delayed Feedback ($\method$)}
    \label{alg:delayed-linear-prior}
    \begin{algorithmic}[1]
    \REQUIRE Threshold $\tau$, warm-start center $\bmu_0$, prior shape
    $\bSigma_0\succeq\Ib$, regularization parameter $\lambda\geq L^2$,
    confidence level $\delta$, delay parameters $(m,\kappa_m)$, horizon $T$
    \STATE Initialize $\bV_0\leftarrow\lambda\bSigma_0$, $\bb_0\leftarrow \lambda\bSigma_0(\kappa_m\bmu_0)$, $\widehat{\btheta}_0 \leftarrow\kappa_m\bmu_0$, $\alpha_0\leftarrow\beta_{\lambda,\delta}$.
    \FOR{$t=1,2,\ldots,T$}
        \STATE Observe context $\xb_t\in\RR^d$
        \STATE Compute $\widetilde\eta_t \leftarrow \langle\widehat{\btheta}_{t-1},\xb_t\rangle + \alpha_{t-1} \|\xb_t\|_{\bV_{t-1}^{-1}}$
        \STATE Play $a_t\leftarrow$ $\actionadd$ if $\widetilde\eta_t\geq\kappa_m\tau$. Otherwise play $a_t\leftarrow$ $\actionskip$
        \STATE Observe all delayed feedback available at the end of round $t$
        \STATE Set $\Vb_t \leftarrow \lambda\bSigma_0 + \sum_{s\leq t} \ind\{a_s=\actionadd\}\xb_s\xb_s^\top$, $\bb_t \leftarrow \lambda\bSigma_0(\kappa_m\bmu_0) + \sum_{s\leq t} \ind\{a_s=\actionadd\}\xb_s\bar z_{s,t}$, and $\widehat{\btheta}_t \leftarrow \Vb_t^{-1}\bb_t$
        \STATE Set $\alpha_t \leftarrow \beta_{\lambda,\delta} + \sum_{s\in\mathcal P_t} \|\xb_s\|_{\Vb_t^{-1}}$
    \ENDFOR
    \end{algorithmic}
\end{algorithm}

While a parameter pre-trained on other profiles might already serve as a good proxy for most individual profiles, it does not account for the heterogeneity revealed in Section~\ref{sec:meta-analysis}, and therefore leads to linear regret as the time horizon $T \to \infty$. To address this issue, we introduce an online adaptation strategy that starts from the pre-trained parameter $\bmu_0$ but still adjusts towards the user's specific parameter during the online interaction. The strategy is summarized in Algorithm~\ref{alg:delayed-linear-prior}. Similar warm-start strategies have also been applied in previous bandit papers such as \citet{cella2020meta,peleg2022metalearning}.

Algorithm~\ref{alg:delayed-linear-prior} extends previous algorithms for one-sided delayed feedback~\citep{vernade2020linear} to the apple-tasting feedback setting. At each round, Algorithm~\ref{alg:delayed-linear-prior} observes the context $\xb_t$ and computes an index $\tilde \eta_t$ that is optimistic for $\kappa_m\eta(\xb_t)$ with high probability, inspired by previous work on linear contextual bandits~\citep{abbasi2011improved}. Then the algorithm plays $\actionadd$ if and only if the index exceeds the threshold $\kappa_m \tau$. Finally, the algorithm updates its parameter estimate based on the most recently available feedback.

The key challenge arises from the estimation of the parameter. For every added sample $s$ and current time step $t \geq s$, a running-zero label is ambiguous: it might be either that the fact is not useful (i.e., $y_s=0$) or that it is still under the delay (i.e., $D_s > t-s$). To address this issue, we follow the approach in~\citet{vernade2020linear} and truncate the observation by a window of length $m$: for each added sample $s$, we wait for $m$ rounds and treat its label as $1$ if we observe a flip, and $0$ otherwise, \emph{even if it flips to $1$ later}. Formally, define the truncated observation
\begin{align*}
    \bar z_{s,t} :=  y_s\ind \big\{  D_s\leq \min\{t-s,m\} \big\}.
\end{align*}
Our next step is to treat the truncated observation $\bar z_{s,t}$ as the regression label. Notice that
\begin{align*}
    \EE[\bar z_{s,s+m}\mid\xb_s] &= \PP[D_s \leq m] \eta(\xb_s) = \langle\kappa_m\btheta^\star,\xb_s\rangle.
\end{align*}
Consequently, the regression on the feature-label pairs $\{(\xb_s, \bar z_{s,t})\}_{s=1}^{t-m}$ targets $\kappa_m\btheta^\star$. Since our regression target is $\kappa_m\btheta^\star$ rather than $\btheta^\star$, we compensate for the scaling by multiplying our estimate $\tilde \eta_t$ by $\kappa_m^{-1}$ when comparing it to the threshold $\tau$. Based on the regression, we compute the confidence radius of the estimate
\begin{align*}
    \beta_{\lambda,\delta} = \sigma \sqrt{d\ell_\lambda+2\log(1/\delta)} + \kappa_m\sqrt{\lambda}\epsilon,
\end{align*}
where $\ell_\lambda := \log (1+TL^2/(\lambda d))$, and $\sigma$ is a sub-Gaussian parameter of the noise, which can be taken to be $1/2$ in our setting.
Finally, because observations from the most recent $m$ added samples may not yet have reached their final truncated values, the confidence radius $\alpha_t$ contains an additional term corresponding to the pending-observation rounds $\cP_t = \{ s\leq t: a_s=\actionadd,t-s<m \}$
\begin{align*}
\alpha_t = \beta_{\lambda,\delta} + \sum_{s\in\cP_t}\|\xb_s\|_{\Vb_t^{-1}}.
\end{align*}

\begin{remark}
Unlike the original algorithm of \citet{vernade2020linear}, our decision rule compares the estimated utility $\eta(\xb)$ against a fixed, exogenous threshold $\tau$. In contrast, comparisons among contemporaneous arms are invariant to a common multiplicative factor. Consequently, our algorithm must know $\kappa_m=\PP(D\leq m)$, or have a sufficiently accurate estimate of it, in order to rescale the decision threshold.
\end{remark}

\begin{table*}[ht]
\centering
\caption{Results on the \datashort benchmark. We use \textbf{bold} for the highest score and \underline{underline} for the second highest. For LLM-based methods, ``(Ours)'' indicates the corresponding method with our fine-grained prompt. 
The results show that $\method$ outperforms the Universal baseline and LLM-based methods uniformly across all budget sizes.
}
\vspace{0.05in}
\begin{tabular}{l | cccc|cccc}
\toprule
Metric        & \multicolumn{4}{c|}{Micro Average} & \multicolumn{4}{c}{Macro Average} \\
Budget        & 60     & 120    & 250    & 500    & 60     & 120    & 250    & 500    \\
\midrule
Oracle        & 69.34  & 92.99  & 100.00    & 100.00    & 77.28  & 95.22  & 100.00    & 100.00    \\
\midrule
Universal     & 17.44  & 35.94  & \underline{56.93}  & 80.60   & 19.34  & 39.29  & 60.76  & \underline{84.91}  \\
Greedy        & 23.01  & 32.95  & 44.49  & 47.34  & 24.86  & 34.16  & 45.08  & 47.56  \\
Greedy (Ours)      & 23.94  & 35.99  & 53.70   & 68.13  & 26.11  & 38.29  & 55.00     & 67.87  \\
Context   & 23.39 & 32.28 & 42.96 & 45.28 & 25.33 & 33.20  & 42.95 & 45.21 \\
Context (Ours) & 23.34 & 36.37 & 52.11 & 66.32 & 24.72 & 37.75 & 53.09 & 66.29     \\
\midrule
\rowcolor{LightCyan} $\methodcoldstart$  & 20.55 & 37.63 & 56.62 & \underline{82.13} & 23.29 & 41.49 & \underline{60.95} & 83.93 \\
\rowcolor{LightCyan} $\methodpretrained$   & \textbf{24.35} & \underline{38.62} & 56.90  & 79.64 & \textbf{26.52} & \underline{41.72} & 59.73 & 81.06 \\
\rowcolor{LightCyan} $\method$ & \underline{24.10}  & \textbf{38.64} & \textbf{58.73} & \textbf{84.48} & \underline{26.40}  & \textbf{41.98} & \textbf{61.84} & \textbf{86.94} \\
\bottomrule
\end{tabular}
\label{tab:main-results-short}
\end{table*}

\begin{theorem}\label{thm:main}
Suppose Assumptions~\ref{assumption:delay},~\ref{assumption:linear-regularity} and~\ref{assumption:warm-start} hold. Fix any $\delta\in(0,1)$. Then, with a proper choice of $\lambda$ that depends on $T,d, m, \kappa_m, \epsilon$, for sufficiently small $\epsilon$, with probability at least $1-\delta_0-\delta$,  Algorithm~\ref{alg:delayed-linear-prior} satisfies
\begin{align*}
    \regret(T)\leq\widetilde O\bigg(\min\bigg\{T\epsilon,\frac{d\sqrt T+md}{\kappa_m}\bigg\}\bigg),
\end{align*}
where $\tilde O(\cdot)$ hides all logarithmic factors and the dependency on $L$.
\end{theorem}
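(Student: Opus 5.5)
The plan has three parts: a confidence ellipsoid in the spirit of \citet{abbasi2011improved} and \citet{vernade2020linear}, adapted to the warm-start regularizer; a per-round regret bound for the threshold rule; and two separate ways of summing those per-round bounds, giving the two branches of the minimum.

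\textbf{Step 1: confidence set.} We work on the event of Assumption~\ref{assumption:warm-start}, which has probability at least $1-\delta_0$. Write $\widehat{\btheta}_t-\kappa_m\btheta^\star=\Vb_t^{-1}\big(\lambda\bSigma_0\kappa_m(\bmu_0-\btheta^\star)+\sum_{s}\ind\{a_s=\actionadd\}\xb_s(\bar z_{s,t}-\kappa_m\langle\btheta^\star,\xb_s\rangle)\big)$ and split the sum into three pieces.
\begin{itemize}[leftmargin=*]
\item For settled samples ($t-s\ge m$), $\bar z_{s,t}=\bar z_{s,s+m}$ has mean $\kappa_m\langle\btheta^\star,\xb_s\rangle$ and is bounded, hence $1/2$-sub-Gaussian. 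The self-normalized martingale bound gives $\sigma\sqrt{d\ell_\lambda+2\log(1/\delta)}$ with probability $1-\delta$.
\item For pending samples ($s\in\cP_t$), the deviation of each $\bar z_{s,t}$ from $\bar z_{s,s+m}$ lies in $[-1,1]$. Their contribution to $\|\cdot\|_{\Vb_t}$ is therefore at most $\sum_{s\in\cP_t}\|\xb_s\|_{\Vb_t^{-1}}$, exactly as in \citet{vernade2020linear}.
\item For the prior term, $\|\lambda\bSigma_0\kappa_m(\bmu_0-\btheta^\star)\|_{\Vb_t^{-1}}\le \sqrt{\lambda}\kappa_m\|\bmu_0-\btheta^\star\|_{\bSigma_0}\le\kappa_m\sqrt\lambda\epsilon$, since $\Vb_t\succeq\lambda\bSigma_0$.
\end{itemize}
Together these give $\|\widehat{\btheta}_t-\kappa_m\btheta^\star\|_{\Vb_t}\le\alpha_t$. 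Because the martingale bound is uniform in $t$, this holds for all $t$ simultaneously.

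\textbf{Step 2: per-round regret.} Since $b+c_{\mathrm{store}}+c_{\mathrm{miss}}=1$, the instantaneous regret equals $|\eta(\xb_t)-\tau|\,\ind\{a_t\ne\pi^\star(\xb_t)\}$. There are two kinds of error.
\begin{itemize}[leftmargin=*]
\item \emph{Wrong skip} ($\eta\ge\tau$ but $a_t=\actionskip$). Optimism gives $\widetilde\eta_t\ge\kappa_m\eta(\xb_t)\ge\kappa_m\tau$, so this cannot happen on the good event.
\item \emph{Wrong add} ($\eta<\tau$ but $a_t=\actionadd$). Here $\kappa_m(\tau-\eta)\le\widetilde\eta_t-\kappa_m\eta\le2\alpha_{t-1}\|\xb_t\|_{\Vb_{t-1}^{-1}}$.
\end{itemize}
Crucially, regret is incurred only on rounds where we add, and those are exactly the rounds whose $\xb_t$ enters $\Vb_t$. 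This is what makes the elliptical potential lemma applicable despite the apple-tasting censoring, and I expect it to be the main conceptual point.

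\textbf{Step 3a: the $d\sqrt T+md$ branch.} Summing over add-rounds gives
\[
\kappa_m^{-1}\sum 2\alpha_{t-1}\big(\|\xb_t\|_{\Vb_{t-1}^{-1}}\wedge1\big).
\]
The $\beta$ part is controlled by Cauchy--Schwarz and the elliptical potential lemma, giving $\beta\sqrt{Td\ell_\lambda}$. The pending part $\sum_t\sum_{s\in\cP_{t-1}}\|\xb_s\|\|\xb_t\|$ is bounded by $m\sum_t\|\xb_t\|^2_{\Vb^{-1}}$ via AM--GM, giving $O(md\ell_\lambda)$. One subtlety is that the rows of $\Vb_{t-1}$ versus $\Vb_{s}$ differ by at most $m$ updates; I will handle this with $\lambda\ge L^2$ and a standard determinant-ratio argument. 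With $\lambda$ chosen so that $\kappa_m\sqrt\lambda\epsilon\sqrt{dT}\lesssim d\sqrt T$, for example $\lambda=L^2\vee d/(\kappa_m^2\epsilon^2)$ capped appropriately, this branch gives $\tilde O((d\sqrt T+md)/\kappa_m)$.

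\textbf{Step 3b: the $T\epsilon$ branch.} Here I bound each round crudely: $\alpha_{t-1}\|\xb_t\|_{\Vb_{t-1}^{-1}}\le\alpha_{t-1}L/\sqrt\lambda$, using $\Vb_{t-1}\succeq\lambda\Ib$. Choosing $\lambda$ large, of order $(\sigma\sqrt{d\ell}+mL)^2/(\kappa_m\epsilon)^2$, makes the noise and pending terms, once divided by $\sqrt\lambda$, at most $O(\kappa_m\epsilon)$. The prior term gives $\kappa_m\epsilon L$. Per-round regret is then $\tilde O(\epsilon)$, and the total is $\tilde O(T\epsilon)$.

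\textbf{Choosing $\lambda$ and remaining obstacles.} Finally, choose $\lambda$ according to which branch is smaller; this is the dependence on $(T,d,m,\kappa_m,\epsilon)$ allowed by the statement. The condition ``$\epsilon$ sufficiently small'' lets the large-$\lambda$ choice be consistent with $\ell_\lambda$ staying logarithmic. The main obstacles I anticipate are the pending-term bookkeeping in Step 3a and verifying that the large-$\lambda$ choice in Step 3b does not inflate $\beta$ through the prior term; it does not, because that term is $\kappa_m\sqrt\lambda\epsilon$ and is divided by $\sqrt\lambda$. A union bound over the two events gives the stated probability $1-\delta_0-\delta$.
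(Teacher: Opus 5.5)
Your proposal is correct and follows essentially the same route as the paper. The shared steps are the warm-started self-normalized confidence set with the pending-sample correction, optimism ruling out wrong skips so that regret accrues only on add-rounds, the split into a $\beta_{\lambda,\delta}$ part (Cauchy--Schwarz plus the elliptical potential lemma) and a pending part (AM--GM, each round counted at most $m$ times), and $\lambda\asymp d/(\kappa_m^2\epsilon^2)$ for the learning branch versus a very large $\lambda$ for the $T\epsilon$ branch. The paper gets the $T\epsilon$ branch by letting $\lambda\to\infty$ in the summed bound via $\ell_\lambda\le TL^2/(\lambda d)$; your per-round bound $\|\xb_t\|_{\Vb_{t-1}^{-1}}\le L/\sqrt{\lambda}$ with an explicit finite $\lambda$ is equally valid.

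A few small corrections. First, no determinant-ratio argument is needed for the pending term, because $\Vb_{t-1}\succeq\Vb_{s-1}$ already gives $\|\xb_s\|_{\Vb_{t-1}^{-1}}\le\|\xb_s\|_{\Vb_{s-1}^{-1}}$. Second, for pending samples you should bound $\bar z_{s,t}-\kappa_m\langle\btheta^\star,\xb_s\rangle\in[-1,1]$ directly, as the paper does. Taking only the deviation from $\bar z_{s,s+m}$ leaves their noise outside your settled-sample martingale. Third, the ``$\epsilon$ sufficiently small'' condition has nothing to do with $\ell_\lambda$. Its role is to ensure $d/(\kappa_m^2\epsilon^2)\ge L^2$, so that the prior term of $\beta_{\lambda,\delta}$ equals $\sqrt d$ in the learning branch.
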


Theorem~\ref{thm:main} shows that, when the pre-trained estimate $\bmu_0$ is accurate enough, the per-step regret is roughly bounded by $\epsilon$, so the algorithm only incurs $T\epsilon$ regret when $T$ is small. On the other hand, when $T$ is large, the algorithm enjoys a sublinear $\tilde{O}(d\sqrt{T})$ regret, indicating that the proposed strategy effectively learns the user-specific parameter.

\section{Experiments}

\begin{table*}[ht]
\centering
\caption{Results on the \datalong benchmark. We use \textbf{bold} for the highest score and \underline{underline} for the second highest. For LLM-based methods, ``(Ours)'' indicates the corresponding method with our fine-grained prompt. 
The results show that $\method$ significantly outperforms the Universal baseline and LLM-based methods uniformly across all budget sizes.
}
\vspace{0.05in}
\begin{tabular}{l|ccccc|ccccc}
\toprule
Metric & \multicolumn{5}{c|}{Micro Average} & \multicolumn{5}{c}{Macro Average} \\
Budget Size  & 120  & 250  & 500  & 800  & 1000  & 120  & 250  & 500 & 800  & 1000  \\
\midrule
Oracle        & 54.30 & 78.61 & 97.51 & 100.00 & 100.00
              & 58.68 & 82.62 & 98.33 & 100.00 & 100.00 \\
\midrule
Universal     & 20.17 & 35.28 & 50.77 & 65.73 & 73.77
              & 22.03 & 37.13 & 52.55 & 67.50 & 75.71 \\
Greedy        & 19.65 & 29.19 & 40.59 & 47.42 & 49.19
              & 20.38 & 30.02 & 40.53 & 46.50 & 47.68 \\
Greedy (Ours) & 18.64 & 29.87 & 45.55 & 58.12 & 62.87
              & 19.54 & 31.32 & 46.59 & 58.56 & 63.10 \\
Context       & 17.01 & 25.43 & 36.30 & 41.56 & 43.49
              & 17.70 & 25.93 & 36.50 & 40.61 & 42.52 \\
Context (Ours)& 18.76 & 28.87 & 45.13 & 57.33 & 63.49
              & 19.79 & 30.13 & 46.76 & 58.54 & 64.40 \\
\midrule
\rowcolor{LightCyan}
$\methodcoldstart$           & 20.75 & 35.46 & 51.22 & \underline{66.82} & 74.42
              & 22.12 & 37.33 & 53.09 & \underline{68.67} & \underline{76.43} \\
\rowcolor{LightCyan}
$\methodpretrained$   & \underline{24.72} & \textbf{38.08} & \underline{55.21} & 66.13 & \underline{75.02}
              & \underline{26.44} & \textbf{39.98} & \underline{56.88} & 67.02 & 76.05 \\
\rowcolor{LightCyan}
$\method$          & \textbf{25.22} & \underline{37.77} & \textbf{56.14} & \textbf{69.21} & \textbf{76.46}
              & \textbf{26.73} & \underline{39.87} & \textbf{57.92} & \textbf{69.93} & \textbf{77.52} \\
\bottomrule
\end{tabular}
\label{tab:main-results-long}
\end{table*}

\subsection{Baselines and Methods}


\paragraph{Model-Free Baselines.} We follow~\citet{in2026personalize} and consider the following two baselines that do not operate on a specific model
\begin{itemize}[leftmargin=*]
    \item \textbf{Oracle:} The memory system is provided with the ground truth labels and only adds the required facts, serving as the skyline of the performances.
    \item \textbf{Universal:} The memory system adds every extracted fact into the memory bank, representing current memory systems.
\end{itemize}

\paragraph{LLM-Based Methods.} We also consider the causal LLM-based baselines introduced in~\citet{in2026personalize}. For both LLM-based baselines, we use \texttt{gpt-5-mini} as the backbone and evaluate two prompting strategies. One follows the prompt of~\citet{in2026personalize}, while the other, denoted by ``Ours'', uses a more fine-grained rubric that additionally accounts for the remaining capacity of the memory bank. The complete prompts are provided in Appendix~\ref{app:prompts-llm-based}.
\begin{itemize}[leftmargin=*]
    \item \textbf{LLM-Greedy.} For each extracted fact, an LLM predicts whether the fact captures crucial information that is likely to be useful in a future session. The prediction is based only on the current fact, without access to the user's previous interaction history. 
    \item \textbf{LLM-Context-Aware.} After each session, the dialogue is summarized in one or two sentences. For each subsequent fact, the LLM is provided with the current fact, the current session, and a sliding window containing the most recent $K$ session summaries, and is asked to determine whether the fact, given the context, might be required in the future.
\end{itemize}


\paragraph{Linear Gating Methods.} We consider three linear gating methods, all of which use the same linear feature representation of the facts. Besides the $\method$ introduced in Algorithm~\ref{alg:delayed-linear-prior}, we consider its two variants as its ablations. We refer to Appendix~\ref{app:hyper-parameter-ucb} for the implementation details.
\begin{itemize}[leftmargin=*]
    \item \textbf{$\methodcoldstart$.} Algorithm~\ref{alg:delayed-linear-prior} run with $\bmu_0=\mathbf{0}$ and no access to other users' data. The gating rule is learned using only the current user's feedback, which isolates the contribution of online adaptation.
    \item \textbf{$\methodpretrained$.} A linear scorer $\bmu_0$ fitted offline on the labeled facts of other users and kept frozen throughout the interaction, without adaptation to the current user's feedback, which isolates the contribution of cross-user pre-training.
\end{itemize}

\begin{figure*}[t]
\centering   
\subfigure[120]{\label{fig:long-120}\includegraphics[width=0.32\textwidth]{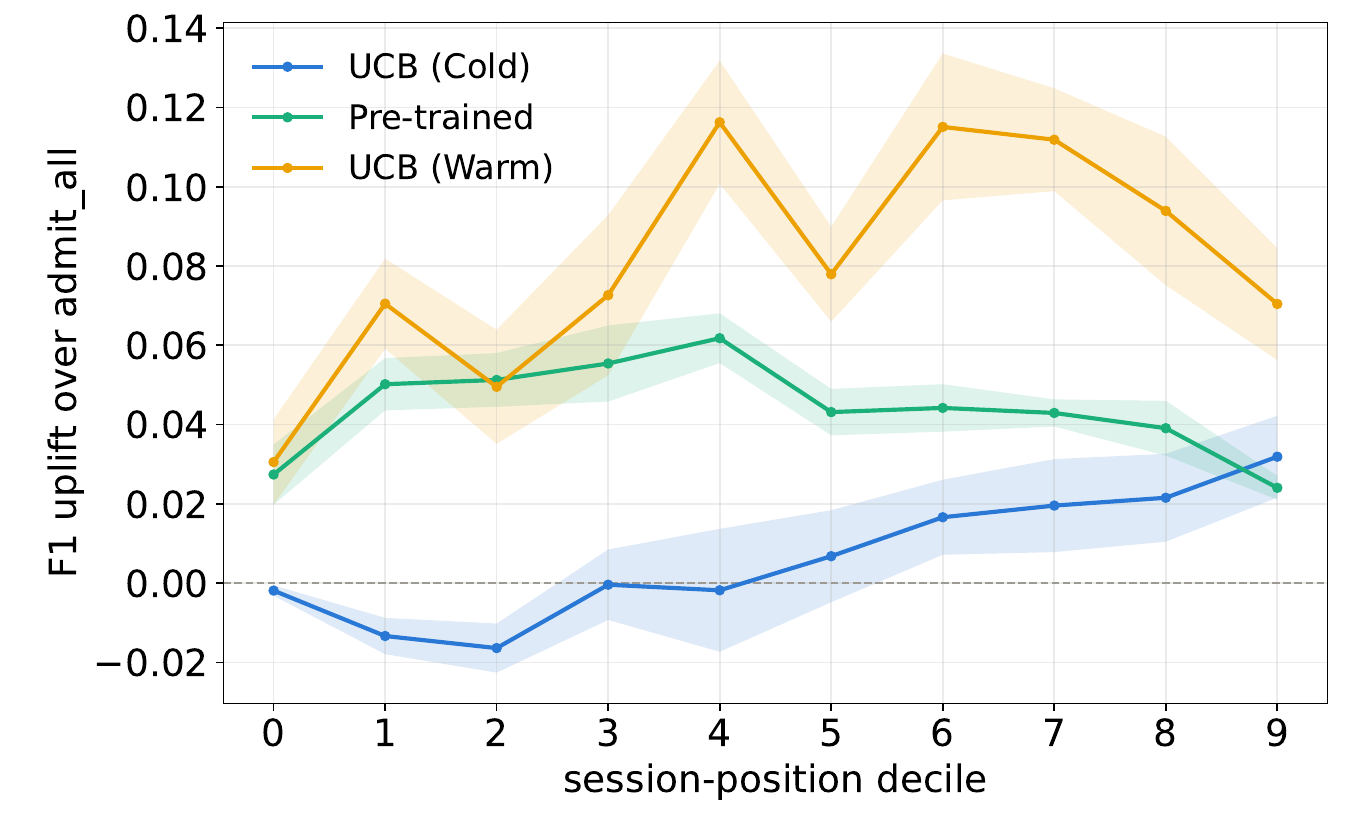}}
\subfigure[500]{\label{fig:long-500}\includegraphics[width=0.32\textwidth]{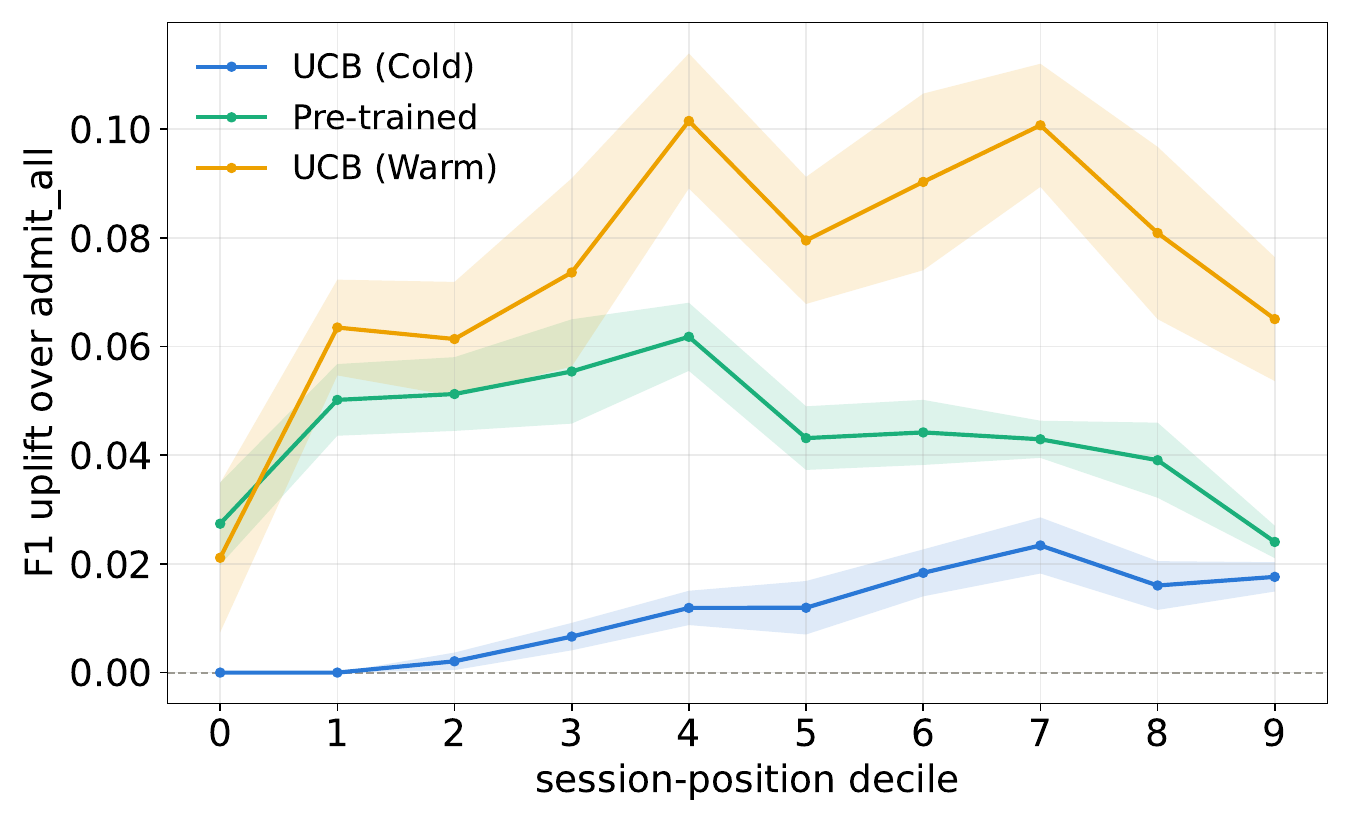}}
\subfigure[1000]{\label{fig:long-1000}\includegraphics[width=0.32\textwidth]{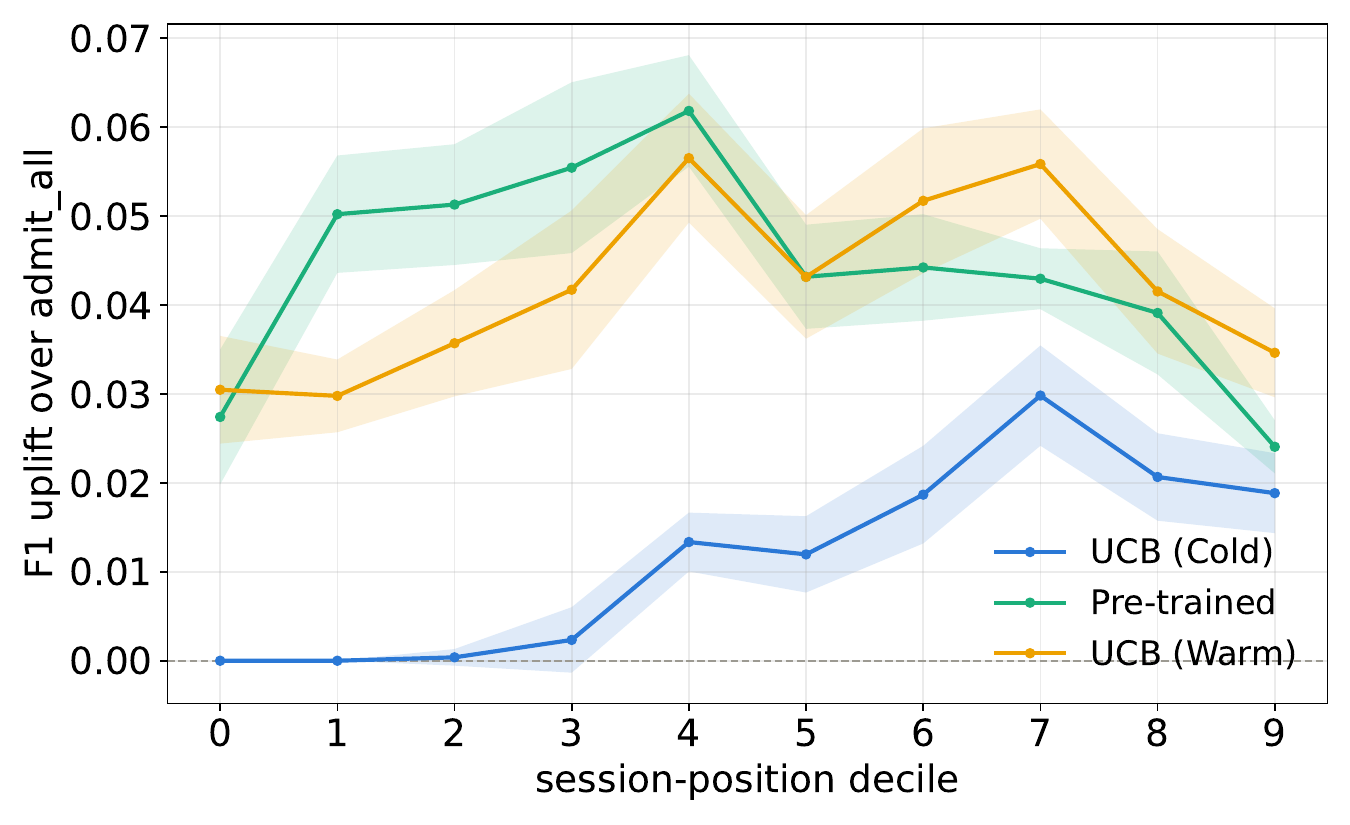}}
\vspace{-0.05in}
\caption{Learning dynamics on \datalong: F1 gain over the Universal baseline in each session-position decile under memory budgets $B \in \{120, 500, 1000\}$. Shaded regions denote the 95\% CI over all users. The gating quality improves in later deciles for online methods $\methodcoldstart$ and $\method$.}
\label{fig:learning-curve-long}
\end{figure*}

\begin{figure*}[t]
\centering   
\subfigure[120]{\label{fig:long-pt-120}\includegraphics[width=0.32\textwidth]{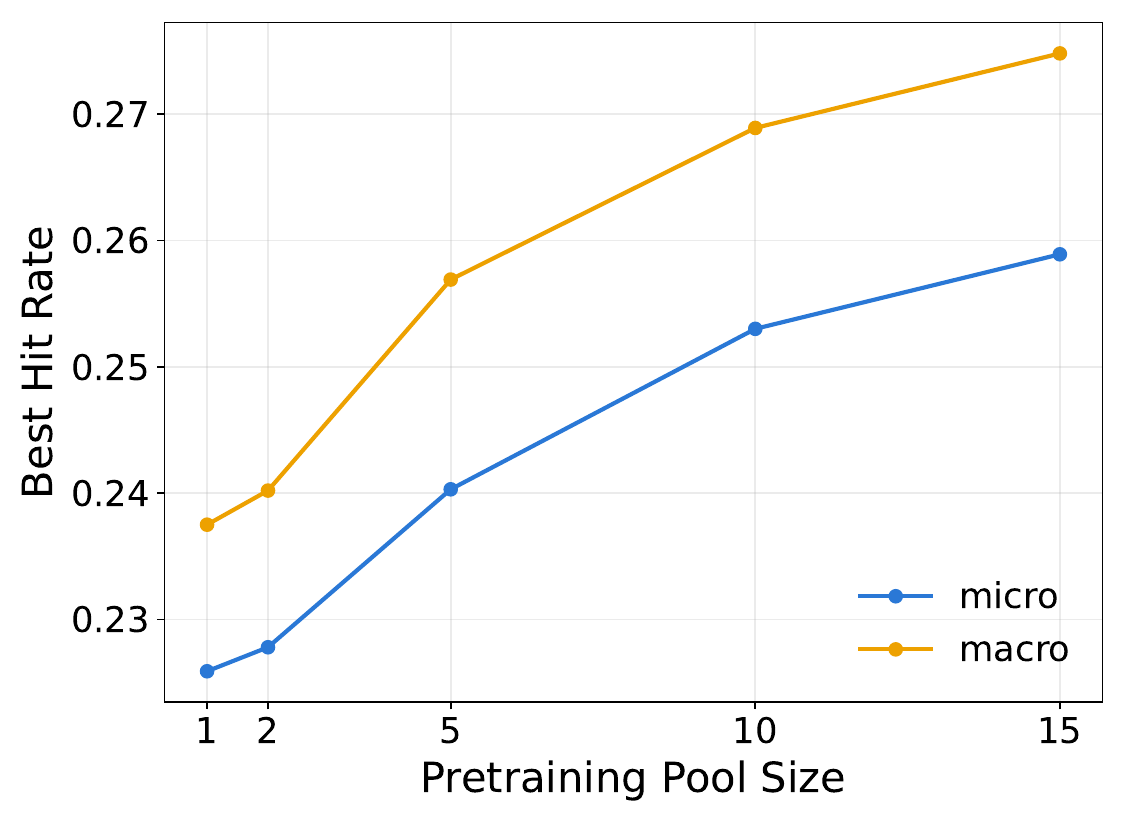}}
\subfigure[500]{\label{fig:long-pt-500}\includegraphics[width=0.32\textwidth]{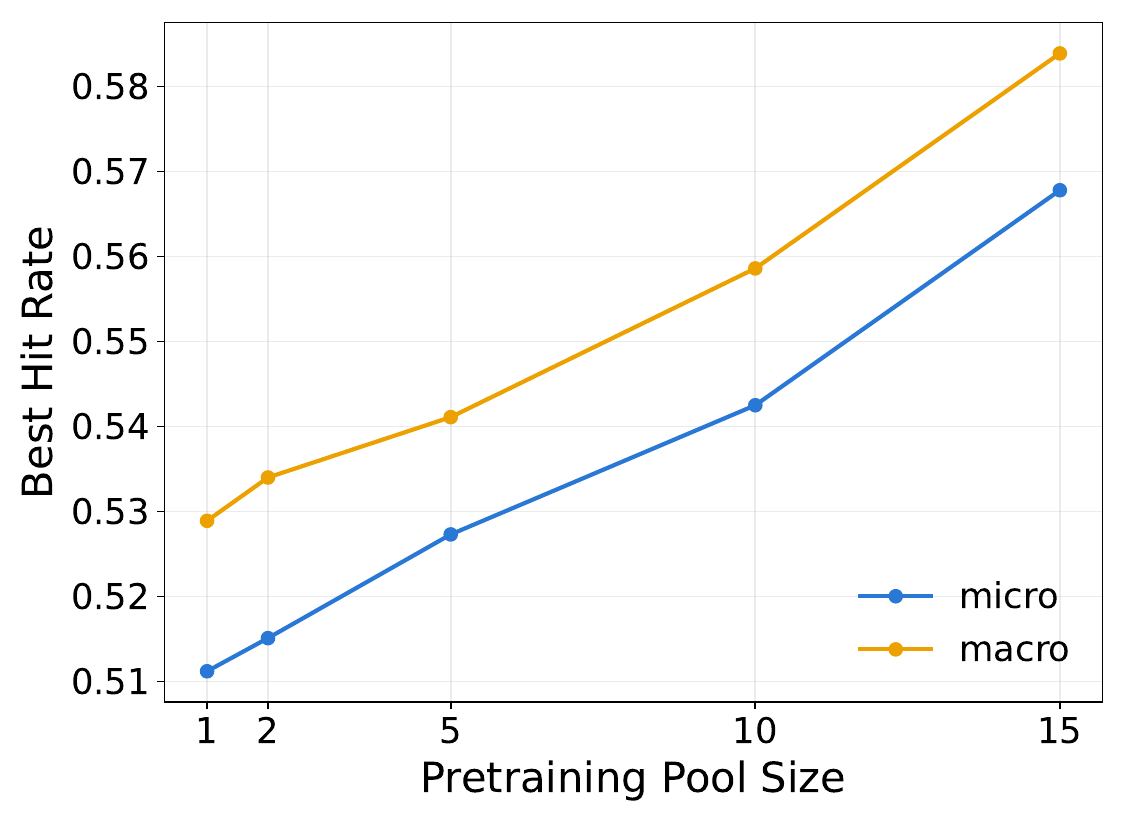}}
\subfigure[1000]{\label{fig:long-pt-1000}\includegraphics[width=0.32\textwidth]{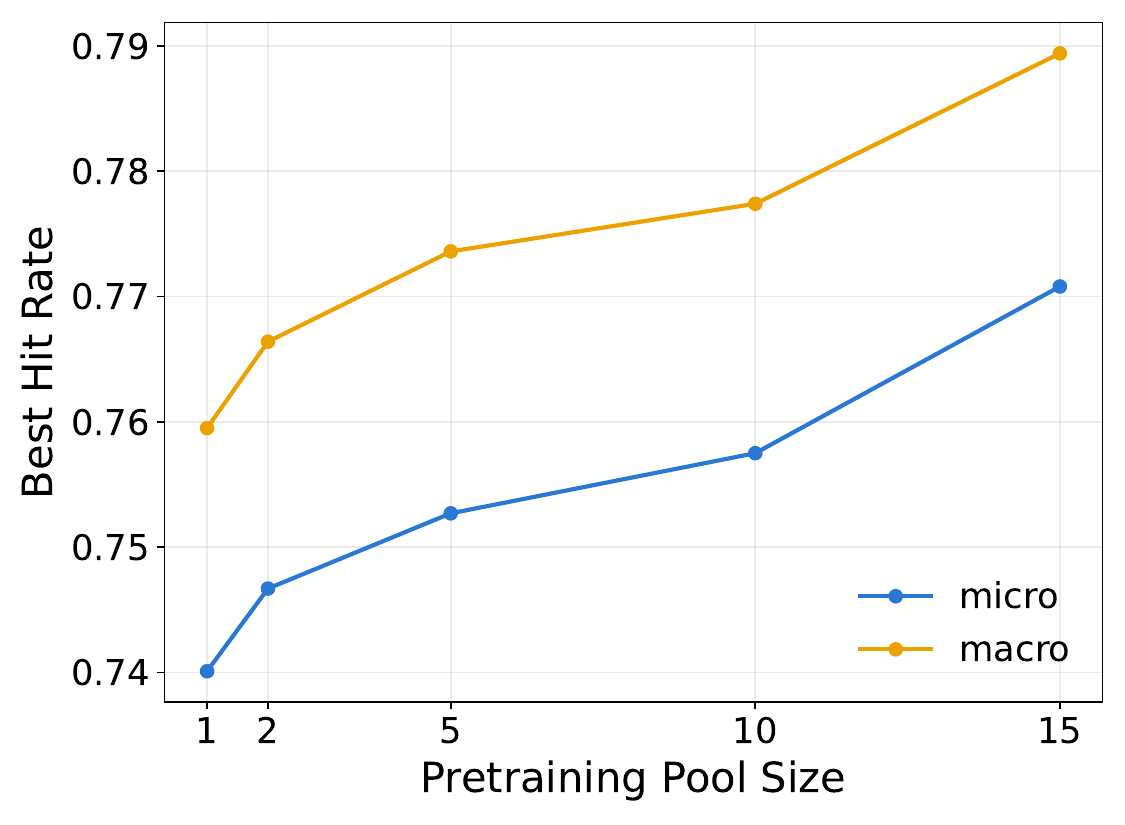}}
\vspace{-0.05in}
\caption{Effect of the pre-training pool size on \datalong: best micro- and macro-averaged RHRs under $B \in \{120, 500, 1000\}$. The results show that the performance continues to improve as the pretraining pool size grows.}
\vspace{-0.05in}
\label{fig:pretrain-size-curve-long}
\end{figure*}

\subsection{Experiment Results}

The results are summarized in Table~\ref{tab:main-results-short} and Table~\ref{tab:main-results-long}. We report the RHR as a percentage under both micro and macro averaging. Micro averaging pools valid retrieval hits across users, whereas macro averaging first computes the RHR separately for each user and then averages across users.

\noindent\textbf{Results on \datashort.}
As shown in Table~\ref{tab:main-results-short}, $\method$ consistently outperforms the Universal baseline and all LLM-based gating methods across every memory budget under both micro and macro averaging. For example, at $B=60$, our method performs on par with the best LLM-based methods and improves over the Universal baseline by $6.66$ micro and $7.06$ macro percentage points. On the other hand, when the memory budget is large, LLM-based methods underperform the Universal baseline, while our method still improves over it by $1.80$ micro and $1.08$ macro points at $B=250$, and by $3.88$ micro and $2.03$ macro points at $B=500$. To summarize, the results indicate that our method consistently outperforms the considered baselines.

Meanwhile, the comparison among the three linear methods further reveals the complementary roles of pre-training and online adaptation. Under the smallest budget, the feedback signal is sparse, so the improvement mostly comes from $\methodpretrained$. On the other hand, with a larger budget, more online feedback becomes available, which improves the performance of UCB. Since our proposed method combines the two strategies, it yields a stable improvement across all budget sizes.


\noindent\textbf{Results on \datalong.}
Table~\ref{tab:main-results-long} shows a consistent pattern on the substantially longer interaction trajectories of \datalong. Our method, $\method$, achieves the best non-oracle result in eight of the ten evaluation columns and the second-best result in the remaining two, where it is narrowly below $\methodpretrained$ at $B=250$. Compared with the Universal baseline, our method improves the micro RHR by $2.49$--$5.37$ percentage points and the macro RHR by $1.81$--$5.37$ points across the five evaluated budgets. 

Moreover, a longer interaction history means that both pre-training and online adaptation are exposed to more data and feedback. Nevertheless, the results show that, compared to the Universal baseline, $\methodpretrained$ leads to a considerable improvement, while UCB without initialization performs only marginally above the Universal baseline. Despite this, we still observe a consistent improvement brought by online adaptation when the memory budget is large. This result indicates that a population-level gating rule transfers useful information to a new user, but cannot fully capture the user's individual storage preferences. Updating the rule with user-specific feedback is therefore essential for maintaining performance over long interaction horizons.


\begin{remark}
Across all experimental settings, we see that LLM-based methods are comparable to the Universal baseline under small budgets. However, they become inferior to the Universal baseline under larger budgets, and the gap increases with the memory budget size. This observation is consistent with the results in~\citet{in2026personalize}. We also observe that a more fine-grained, capacity-aware prompt only mitigates this inferiority but hardly resolves it. 
\end{remark}

\subsection{Ablation Studies}\label{sec:ablation}

\noindent\textbf{Learning Dynamics.} To examine whether the online methods genuinely adapt to the current user, we partition each user's interaction trajectory into ten session-position deciles and report the F1-score improvement over the Universal baseline within each decile. Figure~\ref{fig:learning-curve-long} presents the results on \datalong for representative memory budgets; the remaining budgets and the corresponding \datashort results are provided in Appendix~\ref{app:additional-experiment}.

As suggested by the plots, $\methodcoldstart$ begins with little or even negative improvement over the Universal baseline because it has not yet received enough one-sided, delayed feedback to learn the current user's gating rule. Its performance gradually improves in later deciles, confirming that the online updates extract useful information from the user's interaction history. In contrast, $\methodpretrained$ provides an immediate positive uplift but exhibits comparatively limited improvement as the interaction proceeds because its parameter remains fixed. $\method$ combines the strengths of both methods. It begins with the transferable knowledge supplied by pre-training and then increasingly separates from the frozen policy after user-specific feedback becomes available, also verifying the conceptual superiority of $\method$.


\noindent\textbf{Effect of the Pre-training Pool Size.} We next vary the number of user profiles included in the pre-training pool while keeping the held-out test user fixed. Figure~\ref{fig:pretrain-size-curve-long} reports the best micro and macro RHRs on \datalong for pre-training pool sizes in $\{1,2,5,10,15\}$. Performance improves monotonically with the number of pre-training profiles under every displayed budget and both averaging schemes. Notably, the curves continue to improve at a pool size of 15, especially for $B=500$ and $B=1000$, suggesting that the pre-trained scorer has not yet saturated and may benefit from a larger and more diverse collection of user profiles.



\section{Conclusion and Future Work}

In this paper, we study the personalized memory gating problem. We first provide a retrieval-based utility labeling pipeline, based on which we construct a dataset for memory-gating evaluation. We further frame the problem as an apple tasting with one-sided delayed feedback problem, and propose an algorithm for this problem that achieves an $\tilde{O}\big(T\epsilon \wedge (d\sqrt{T}+md)\big)$ regret guarantee, where $m$ is the delay window,, as demonstrated by our theoretical analysis. Experiments on our constructed dataset show that our method consistently outperforms all baselines, confirming its effectiveness. Currently, our method assumes a stable parameter for each user. Accounting for the evolution of the utility pattern remains an interesting direction for future work.

\bibliography{ref}
\bibliographystyle{tmlr}


\clearpage
\appendix

\section{Benchmark Construction Details}

\subsection{Prompts in Dataset Construction}

In this section, we provide the detailed prompts used in the construction of the dataset. For the stages up to and including dialogue generation, we use the same prompts as~\citet{in2026personalize} and refer readers to that paper for details.

\begin{promptbox}{Retrieval Trigger Prompt}
\scriptsize
You are deciding, on behalf of a memory-augmented assistant, whether responding well to a user's message requires recalling something from an earlier conversation with this same user.

User's message: 
\begin{verbatim}
{user_content}
\end{verbatim}

First, decide: does responding well to this message require recalling a specific fact, preference, plan, or detail the user has shared before -- as opposed to being answerable from this message alone or from general knowledge?

If a memory lookup is needed, compose a short, self-contained search query capturing specifically what needs to be recalled -- not a copy of the whole message, just the retrieval-relevant part.

Return strict JSON only, no other text:
\begin{verbatim}
{{"query_needed": true or false, "query": "<query text, or empty string if not needed>"}}
\end{verbatim}
\end{promptbox}

\begin{promptbox}{Retrieval Verification Prompt}
\scriptsize
You are judging whether retrieved memory facts are potentially helpful to respond to a user's message in an ongoing conversation.

User message:
\begin{verbatim}
{query_text}
\end{verbatim}

Candidate fact:
\begin{verbatim}
{candidate}
\end{verbatim}

For each candidate, decide whether it is potentially helpful to respond well to this message. A fact is necessary if omitting it would make the response worse in any of these ways:
- Factually wrong or incomplete given what the user has told you
- Generic or one-size-fits-all where a tailored response is possible
- Inconsistent with a preference, constraint, or decision the user has already established
- Missing personalization the user would reasonably expect, given they already shared this

A fact is NOT necessary if it is merely on the same topic or in the same domain as the message, but doesn't actually change what a good response would say -- e.g. it's superseded, tangential, or the message doesn't call for that level of detail.

Being topically related is not sufficient on its own, but genuine personalization value (not just factual correctness) does count as necessary. If a candidate is a close call, prefer to keep it -- this check does not need to be the final word.

Return strict JSON only, no other text, with the result and your reasoning:
\begin{verbatim}
{{"helpful": <yes|no>, "reasoning": "<your reasoning>"}} 
\end{verbatim}
(empty list if none are necessary)
\end{promptbox}

\begin{promptbox}{Retrieval Recall Prompt}
\scriptsize
You are judging retrieved memory facts against a user's message in an ongoing conversation. None of these facts individually scored as a strong match, but they may still matter in combination.

User message:
\begin{verbatim}
{query_text}
\end{verbatim}

Candidate fact:
\begin{verbatim}
{format_candidates(candidates)}
\end{verbatim}

Is there any subset of these facts that, used together, would be useful or necessary to respond to the user's message? A single fact by itself not being a strong match does not rule out a combination of two or more being relevant.

Return strict JSON only, no other text:
\begin{verbatim}
{{"useful_entry_ids": ["<id>", ...]}}
\end{verbatim}
(empty list if no combination is useful)
\end{promptbox}

\begin{promptbox}{Quality Assessment Prompt}
\scriptsize
You are checking whether a specific piece of background information was actually used in an assistant's reply within an ongoing conversation.

Conversation so far:
\begin{verbatim}
{session_prefix}
\end{verbatim}

Background fact under test:
\begin{verbatim}
{fact_content}
\end{verbatim}

Assistant's reply:
\begin{verbatim}
{response}
\end{verbatim}

Classify how much the reply draws on that specific fact:

- contains info from fact: the reply includes specific content, details, or claims that come from this fact (not just a generic reply that happens to be compatible with it).

- weakly related: the reply touches the same general topic as the fact but doesn't actually use its specific content.

- irrelevant: the reply shows no sign of drawing on this fact at all.

Return strict JSON only, no other text:
\begin{verbatim}
{{"incorporation": "<contains_info_from_fact|weakly_related|irrelevant>"}}
\end{verbatim}

\end{promptbox}

\subsection{Meta Evaluation}\label{app:meta-evaluation}

In this section, we conduct experiments to verify that the labels produced by the introduced pipeline genuinely reflect the actual utility during later interaction, rather than merely being an artifact of the selected memory system and LLM-judge. 

\paragraph{Method.} We sample collection of probes (i.e., user's prompt) with valid retrieval hits. For each sample, we sample one judge-confirmed fact and one unconfirmed candidate, either not retrieved or rejected by the LLM-judge. For each sampled fact, we place it in the context of an LLM (we used \texttt{gpt-5-mini} here) and draw $k$ completions at non-zero temperature. For each generated response, we employ \texttt{gpt-5-mini} as a separate judge model to decide if the generated response related to the candidate fact: contains the fact, weakly related or irrelevant, and mapping to a score within $\{1, 0.5, 0\}$ and average over the $k$ responses. 
For the incorporation level test, we conduct a Wilcoxon signed-rank test over all pairs of (probe, average score of confirmed fact, average score of unconfirmed fact), and report the $p$-value, effect size $r$, and the 95\% confidence interval of the scores.

\begin{table}[ht]
  \centering
  \caption{Per-user fact-incorporation scores (mean, with 95\% bootstrap
    CI) for confirmed vs.\ unconfirmed target facts, and the paired
    Wilcoxon test comparing them within each user. Reported as a
    robustness check on the pooled result, not as independent
    significance tests.}
  \label{tab:meta-per-uuids}
  \begin{tabular}{lrcccc}
    \toprule
    User & $n$ & Confirmed & Unconfirmed & $r$ & $p$ \\
    \midrule
    0045d22a & 12 & 0.600~[0.42, 0.78] & 0.167~[0.03, 0.33] & 1.00 & $1.95\times10^{-3}$ \\
    0097794e & 29 & 0.510~[0.36, 0.66] & 0.155~[0.07, 0.26] & 0.75 & $1.04\times10^{-3}$ \\
    009850d8 & 18 & 0.372~[0.20, 0.55] & 0.144~[0.02, 0.31] & 1.00 & $4.98\times10^{-3}$ \\
    00aefb8e & 32 & 0.456~[0.32, 0.60] & 0.091~[0.03, 0.18] & 0.90 & $1.11\times10^{-4}$ \\
    00e85c7f & 22 & 0.482~[0.35, 0.62] & 0.077~[0.01, 0.15] & 0.95 & $4.07\times10^{-4}$ \\
    \midrule
    \textbf{Pooled} & \textbf{113} & \textbf{0.477~[0.40, 0.55]} & \textbf{0.121~[0.08, 0.17]} & \textbf{0.88} & \textbf{$6.95\times10^{-13}$} \\
    \bottomrule
  \end{tabular}
\end{table}

\paragraph{Results.} We sampled $5$ users and pooled all
$n{=}113$ matched confirmed and unconfirmed pairs across the five evaluated users, of which the results are presented in Table~\ref{tab:meta-per-uuids}. The pooled results show that, compared to unconfirmed facts, confirmed facts are significantly more incorporated into the responses. Moreover, a paired Wilcoxon signed-rank test on the matched per-probe scores rejects the null of no difference ($W{=}220.5$, $p \approx 7\times 10^{-13}$) with a large matched-pairs rank-biserial effect size ($r{=}0.88$), indicating a statistical significance. 
Moreover, when breaking down to the per-user results, we find that all of these results agree with the pooled results. For every user, confirmed facts are incorporated into responses much more often than unconfirmed facts. Mean scores range from $0.372$ to $0.600$ for confirmed facts and from $0.077$ to $0.167$ for unconfirmed facts, leading to a per-user gap of $0.23$ to $0.43$. Moreover, the effect size is at least $0.75$ and $p < 0.005$ for all users, indicating that the difference stays significant across five users, showing that the pooled result is not driven by any single user: facts with positive labels are consistently used in the responses.

\subsection{Statistics of the Constructed Benchmark}\label{app:benchmark-statistics}

Tables~\ref{tab:dataset-statistics-short} and~\ref{tab:dataset-statistics-long} summarize some basic statistics of the constructed dataset. We highlight several notable properties that shape the gating problem.

\begin{table}[ht]
\centering
\vspace*{0.05in}
\caption{Per-user statistics of \datashort, computed over the 20 users. Delay-to-first-hit is measured in sessions from extraction to the first valid retrieval hit.}
\vspace{0.05in}
\begin{tabular}{l | ccccc}
\toprule
Metric                                 & Mean    & Median  & Min    & Max     & Std    \\
\midrule
Sessions per uuid                      & 54.25   & 57      & 36     & 70      & 9.93   \\
Total entries per uuid                 & 856.5   & 894     & 474    & 1295    & 203.58 \\
Entries with valid hits per uuid & 124.5   & 122     & 45     & 241     & 56.93  \\
Hit rate (hit entries / total entries) & 0.142 & 0.140 & 0.064 & 0.236 & 0.047 \\
Avg delay-to-first-hit (sessions)      & 13.47   & 13.02   & 6.95   & 23.2    & 3.76  \\
\bottomrule
\end{tabular}
\label{tab:dataset-statistics-short}
\end{table}

\begin{table}[ht]
\centering
\vspace*{0.05in}
\caption{Per-user statistics of \datalong, computed over the 20 users. Delay-to-first-hit is measured in sessions from extraction to the first valid retrieval hit.}
\vspace{0.05in}
\begin{tabular}{l | ccccc}
\toprule
Metric                                 & Mean    & Median  & Min    & Max     & Std    \\
\midrule
Sessions per uuid                      & 150.35  & 153    & 94    & 193   & 30.84  \\
Total entries per uuid                 & 2449.65 & 2435   & 1378  & 3341  & 550.39 \\
Entries with valid hits per uuid       & 493.9   & 518    & 225   & 780   & 170.45 \\
Hit rate (hit entries / total entries) & 0.199   & 0.203  & 0.113 & 0.256 & 0.043  \\
Avg delay-to-first-hit (sessions)      & 30.44   & 28.64  & 19.38 & 41.08 & 5.86  \\
\bottomrule
\end{tabular}
\label{tab:dataset-statistics-long}
\end{table}

 
\paragraph{Sparsity of Utilized Facts.}
On average, only $14.2\%$ of a user's facts are utilized in \datashort and $19.9\%$ in \datalong. Even the user with the highest ratio reaches only $23.6\%$ and $25.6\%$, respectively. In other words, for typical user, more than three quarters of the extracted facts are never used to answer a later query. Consequently, a policy that stores every fact fills a limited memory bank mostly with facts that are never used, and exhausts the bank capacity quickly. This leaves substantial room for a gating policy that learns to skip un-needed facts.
 
\paragraph{Relatively Long Delay of Feedback.}
On average, a utilized fact receives its first valid hit $13.47$ sessions after extraction in \datashort and $30.44$ sessions in \datalong. These delays are roughly a quarter and a fifth of the average history length, respectively, and they grow with the length of the history rather than vanishing. With about $16$ facts extracted per session on both benchmarks, this corresponds to several hundred subsequent facts, which exceeds some of the memory budgets we evaluate. This indicates that, feedback on an added fact arrives only after a substantial fraction of the interaction has passed, and might even get evicted before a meaningful signal is observed. Thus, the delay presents as an essential source of hardness for learning and motivates both the delay-aware estimator in Algorithm~\ref{alg:delayed-linear-prior} and the warm start from other users' data.


\section{Experiment Details}

\subsection{Prompts of LLM-Based Methods}\label{app:prompts-llm-based}

In this section, we provide the details of the prompts we used for LLM-greedy gating and LLM-context-aware gating. For each method, we consider two prompt templates: one directly adapted from~\citet{in2026personalize}, and a fine-grained version that takes the remaining capacity of the memory bank into consideration.

\paragraph{Greedy.} At each fact, the LLM receives the content of the current fact and outputs a binary decision.

\begin{promptbox}{LLM-Greedy Gating Prompt~\citep{in2026personalize}}
\scriptsize
You are a memory policy agent. Decide whether this fact should be stored in long-term memory.

\#\# Fact
\begin{verbatim}
{fact.get("content", "")}
\end{verbatim}

store = true: durable, worth-storing information about the user -- part of an ongoing project, recurring goal, preference, or constraint.
store = false: transient fact -- trivial, one-off, or self-contained with no lasting relevance.

Respond ONLY with: 
\begin{verbatim}
{{"store": <true|false>}}
\end{verbatim}
\end{promptbox}

\begin{promptbox}{LLM-Greedy Gating Prompt with Remaining Capacity}
\scriptsize
You are the admission gate for a long-term memory bank in a memory-augmented assistant. The bank has a hard capacity limit -- every fact you admit competes with every other fact for a slot.

\begin{verbatim}
{capacity_block(history)}

{GATING_RUBRIC}
\end{verbatim}

Fact:
\begin{verbatim}
{fact.get("content", "")}
\end{verbatim}

First briefly note which KEEP or DROP category the fact best matches, then decide.

Return strict JSON only, no other text:
\begin{verbatim}
{{"category": "<short KEEP/DROP category label>", "store": true or false}}
\end{verbatim}
\end{promptbox}

\paragraph{Context-Aware.} Before prediction, the current session is summarized in 1–2 sentences and appended to a running summary buffer. At each session, the LLM receives the current dialogue alongside
the most recent K session summaries as historical context.

\begin{promptbox}{LLM-Context-Aware Gating Prompt~\citep{in2026personalize}}
\scriptsize
You are a memory policy agent. Decide whether this fact should be stored in long-term memory.

\#\# Recent Session Summaries 
\begin{verbatim}
{history_summaries}
\end{verbatim}

\#\# Current Session Dialogue
\begin{verbatim}
{current_dialogue}
\end{verbatim}

\#\# Fact
\begin{verbatim}
{fact.get("content", "")}
\end{verbatim}

store = true: durable, worth-storing information about the user -- part of an ongoing project, recurring goal, preference, or constraint.
store = false: transient fact -- trivial, one-off, or self-contained with no lasting relevance.

Respond ONLY with: 
\begin{verbatim}
{{"store": <true|false>}}
\end{verbatim}
\end{promptbox}

\begin{promptbox}{LLM-Context-Aware Gating Prompt with Remaining Capacity}
\scriptsize
You are the admission gate for a long-term memory bank in a memory-augmented assistant. The bank has a hard capacity limit -- every fact you admit competes with every other fact for a slot.

\begin{verbatim}
{capacity_block(history)}

{GATING_RUBRIC}
\end{verbatim}

\#\# Recent Session Summaries 
\begin{verbatim}
{history_summaries}
\end{verbatim}

\#\# Current Session Dialogue
\begin{verbatim}
{current_dialogue}
\end{verbatim}

Fact:
\begin{verbatim}
{fact.get("content", "")}
\end{verbatim}

First briefly note which KEEP or DROP category the fact best matches, then decide.

Return strict JSON only, no other text:
\begin{verbatim}
{{"category": "<short KEEP/DROP category label>", "store": true or false}}
\end{verbatim}
\end{promptbox}

The contents of the \textit{gating rubric} and \textit{capacity block} are given as follows.

\begin{promptbox}{Gating Rubric}
\scriptsize
KEEP (durable, user-specific -- worth recalling in a future conversation):

- Concrete preferences ("prefers X over Y", likes/dislikes)

- Standing plans, goals, or decisions the user has committed to

- Constraints or limitations (budget, health, schedule, skill level, access)

- Relationships, roles, or people/pets recurring in the user's life

- Stable biographical or contextual facts (job, location, ongoing projects)

DROP (low value -- not worth the slot):

- Small talk, pleasantries, filler, or meta-conversation

- One-off transient state ("is tired right now", "it's raining today")

- Generic statements that aren't specific to this user

- Speculative or hypothetical musings the user hasn't actually decided on

- Facts that are near-duplicates of something already obviously established
\end{promptbox}

\begin{promptbox}{Capacity Block}
\scriptsize

\begin{verbatim}
If vacant
\end{verbatim}

Capacity check: the memory bank has plenty of free room right now, so bias toward KEEPING. Admit anything that plausibly fits a KEEP category below, even if it's only moderately durable or specific -- only DROP facts that clearly belong in the DROP list.

\begin{verbatim}
If somewhat vacant
\end{verbatim}

Capacity check: the memory bank is filling up, so be selective. Admit only facts that clearly fit a KEEP category below with a reasonable chance of mattering later -- DROP anything vague, borderline, or that fits the DROP list.

\begin{verbatim}
If space limited
\end{verbatim}

Capacity check: the memory bank is nearly full, so be highly selective. Admit only facts you are confident are high-value KEEPs (firm decisions, explicit constraints or preferences the user is likely to reference again, major ongoing plans) -- DROP anything minor, speculative, or merely plausible.
\end{promptbox}

\subsection{Hyper-parameters of Proposed Methods}\label{app:hyper-parameter-ucb}

In this section, we provide the implementation details and hyper-parameters of linear gating methods. We use OpenAI \texttt{text-embedding-3-small} to obtain an embedding of each fact and treat the embedding as the linear feature. For $\method$ and $\methodpretrained$, for each given test user, we first fit a linear parameter on a training set composed of $15$ other users randomly selected from the pool of $19$ users. We then select the hyper-parameters by validating on another $4$ users. We keep the routine of Algorithm~\ref{alg:delayed-linear-prior} unchanged, while making the confidence radii $\alpha_t$ and the gating threshold $\tau$ as hyper-parameters. We further simplify the initialization of the pre-trained covariance matrix $\bSigma_0$ by setting it to identity matrix $\Ib$, for $\method$.

For $\methodcoldstart$, we consider the confidence radius $\alpha  \in \{0.1, \, 0.2,\, 0.5,\, 1.0\}$ and the gating threshold $\tau \in \{0.05,\, 0.1,\, 0.2,\, 0.5 \}$ and $\lambda \in \{0.1, 1, 10\}$. For each fact, we wait for $m$ sessions to see if there is at least one valid retrieval. We consider $m \in \{8,10,15\}$ for \datashort and $m \in \{25, 35, 45\}$ for \datalong, to account for the longer delay. We report the best performance achieved by the combination of hyper-parameters that achieves the highest micro RHR. 

For $\methodpretrained$, we sweep the gating threshold $\tau \in \{0.05,\, 0.1,\, 0.2,\, 0.5 \}$. For UCB with warm start, we adapt the gating threshold by considering $\tau \in \{0.5\tau_{\text{pt}}, \tau_{\text{pt}} ,2\tau_{\text{pt}}\}$, where $\tau_{\text{pt}}$ is the best validated value for $\methodpretrained$. We further consider the confidence radius $\alpha  \in \{0.1, \, 0.2,\, 0.5,\, 1.0\}$. For \datalong, we consider $ \lambda  \in \{ 200, \, 100, \, 50, \, 20 \}$ and $m  \in \{25, \, 35\}$. For \datashort, we account for the smaller amount of pre-training data and the shorter delay, and therefore consider $\lambda \in \{ 20,\, 10,\, 5,\, 2 \}$ and $m  \in \{10, \, 15\}$.

\section{Additional Experiment Results}\label{app:additional-experiment}

This section reports the learning-dynamics and pre-training pool-size results omitted from Section~\ref{sec:ablation}. Figures~\ref{fig:learning-curve-short} and~\ref{fig:pretrain-size-curve-short-app} show the per-decile F1 gain over the Universal baseline and the effect of the pre-training pool size on \datashort, respectively. The same set of results for the remaining budgets of \datalong (i.e., $250$ and $800$) is presented in Figure~\ref{fig:learning-curve-long-app}.

\begin{figure*}[ht]
\centering   
\subfigure[250]{\label{fig:long-250}\includegraphics[width=0.26\textwidth]{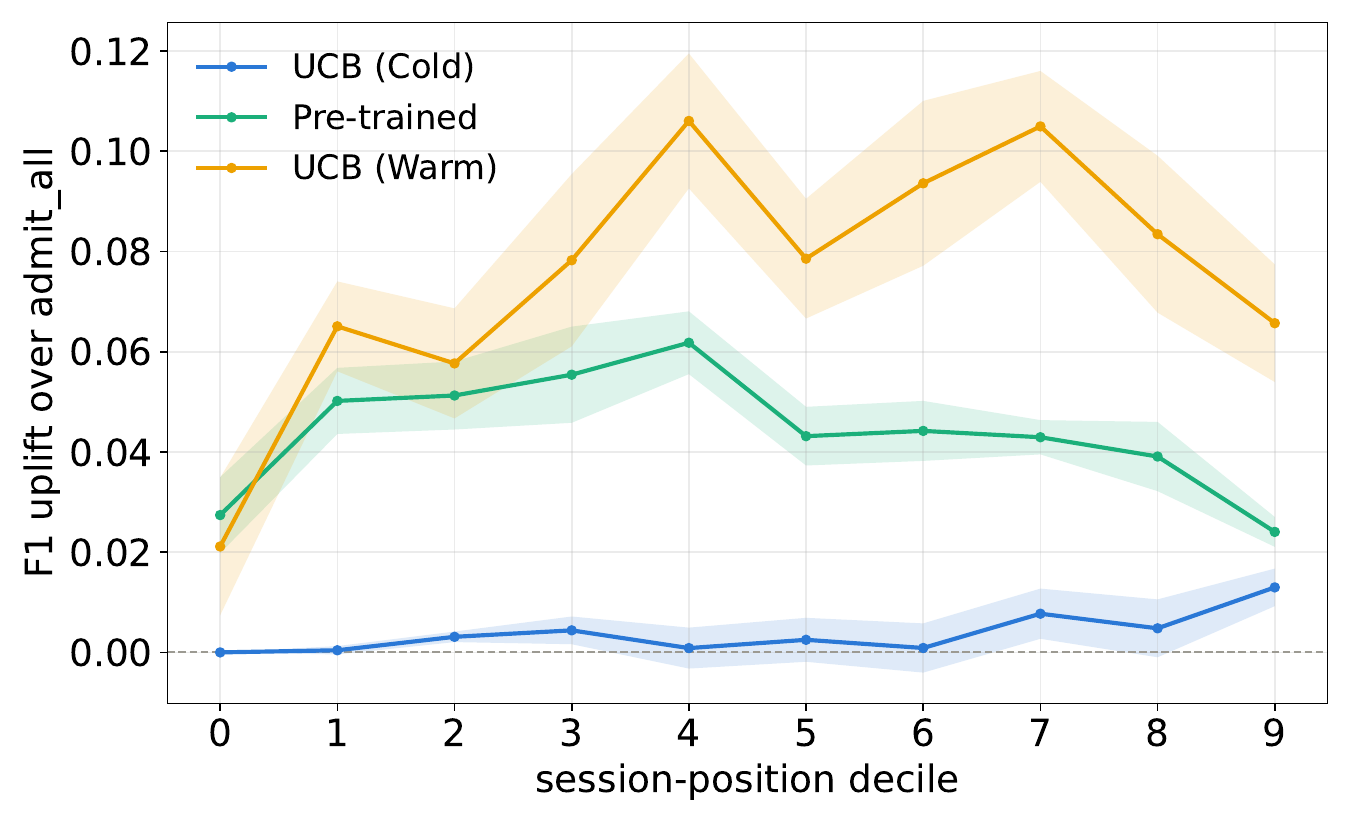}}
\subfigure[800]{\label{fig:long-800}\includegraphics[width=0.26\textwidth]{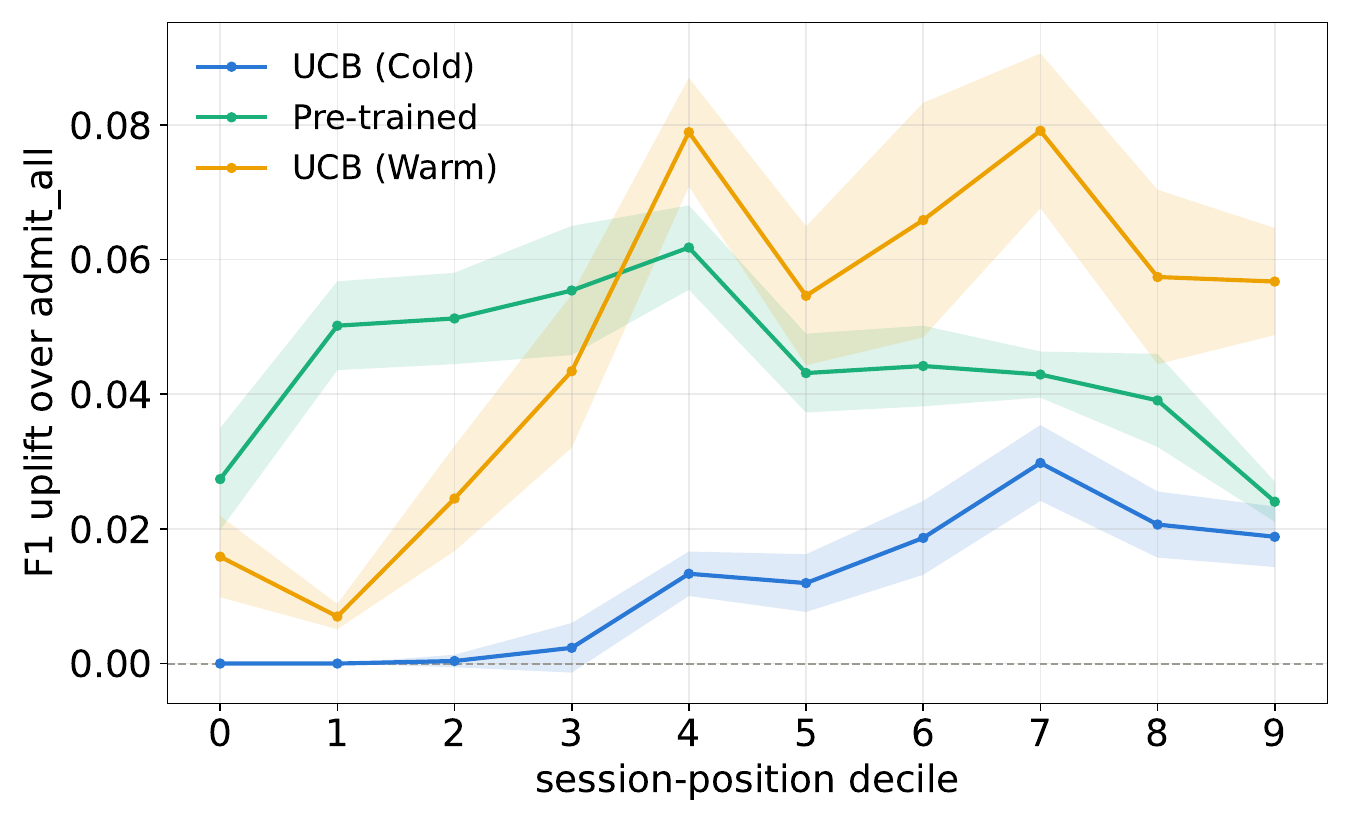}}
\subfigure[250]{\label{fig:long-pt-250}\includegraphics[width=0.22\textwidth]{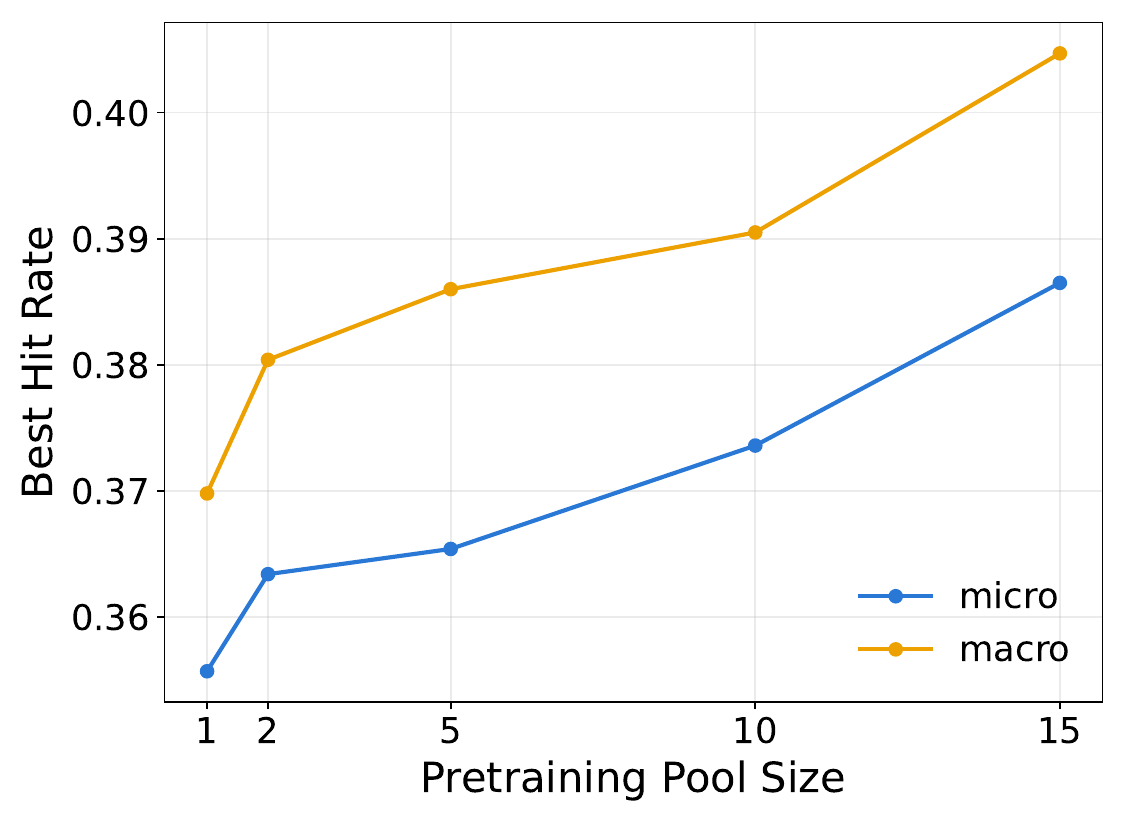}}
\subfigure[800]{\label{fig:long-pt-800}\includegraphics[width=0.22\textwidth]{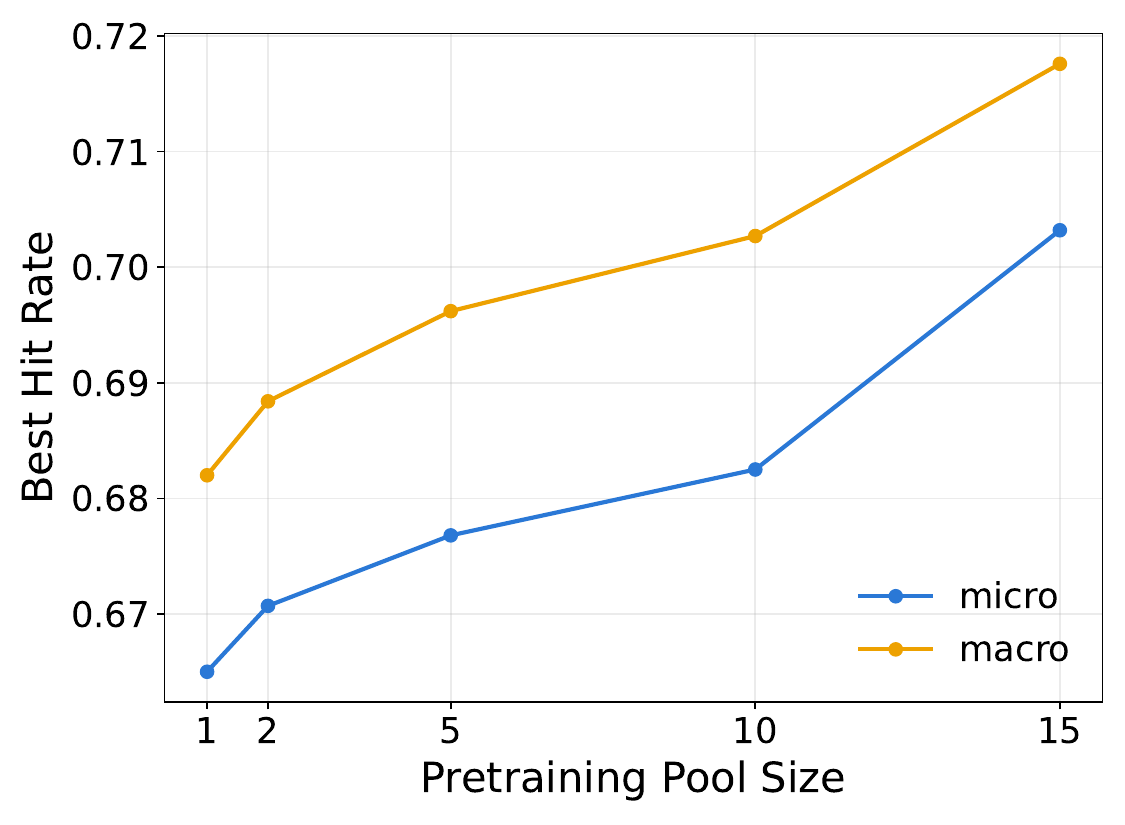}}
\caption{Learning dynamics~(\ref{fig:long-250}, \ref{fig:long-800}) and effect of the pre-training pool size~(\ref{fig:long-pt-250}, \ref{fig:long-pt-800}) on \datalong for the remaining memory budgets $B \in \{250, 800\}$. Shaded regions denote the 95\% CI over all users.}
\label{fig:learning-curve-long-app}
\end{figure*}


\begin{figure*}[ht]
\centering
\subfigure[60]{\label{fig:short-60}\includegraphics[width=0.24\textwidth]{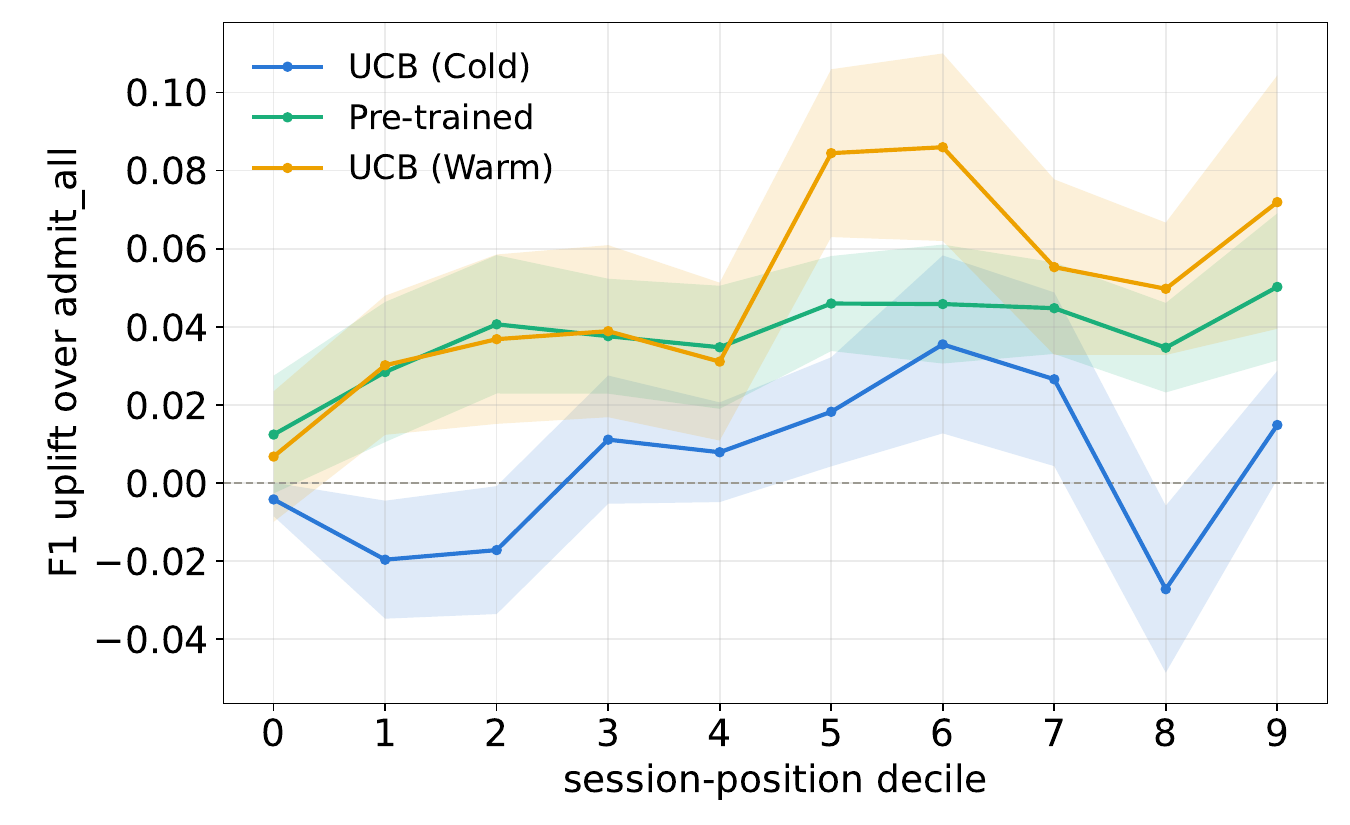}}
\subfigure[120]{\label{fig:short-120}\includegraphics[width=0.24\textwidth]{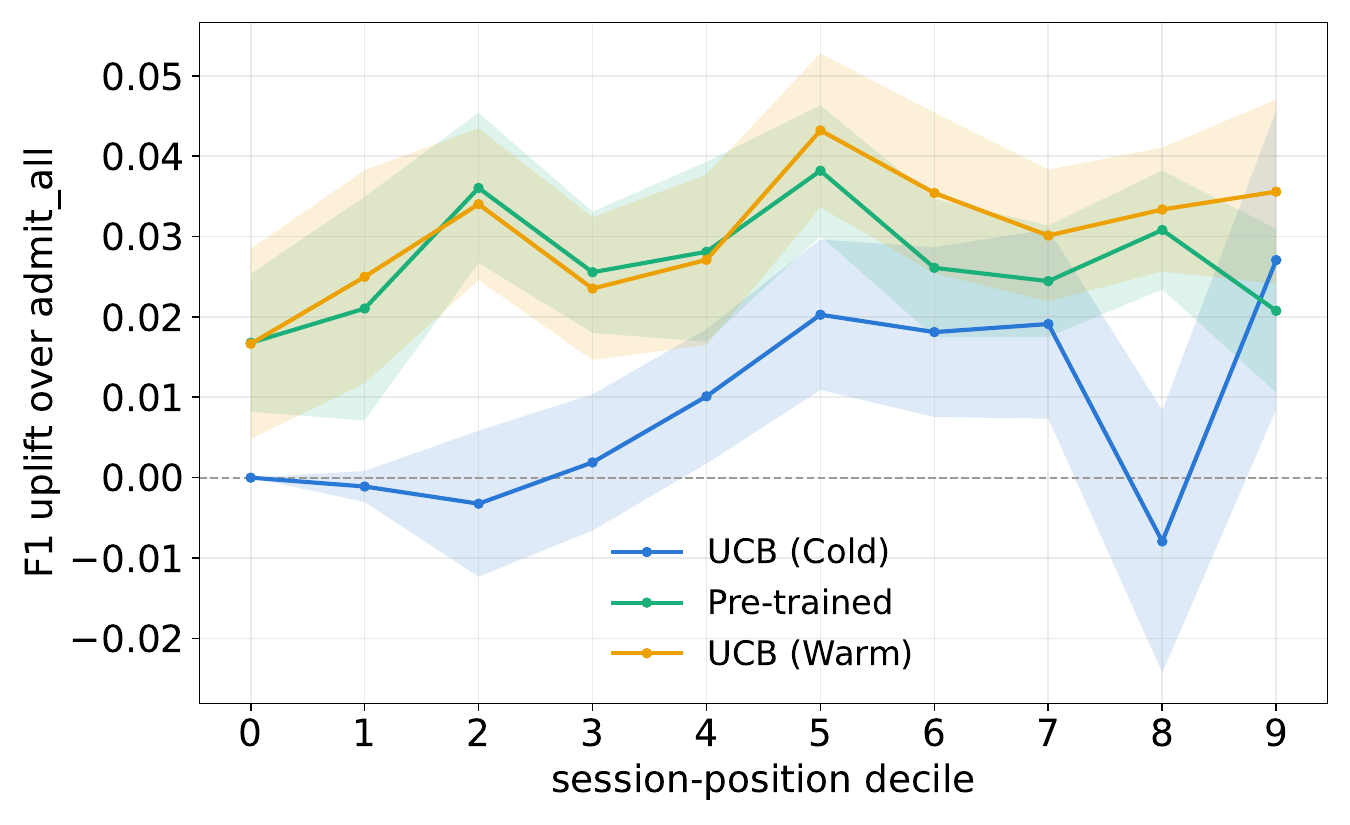}}
\subfigure[250]{\label{fig:short-250}\includegraphics[width=0.24\textwidth]{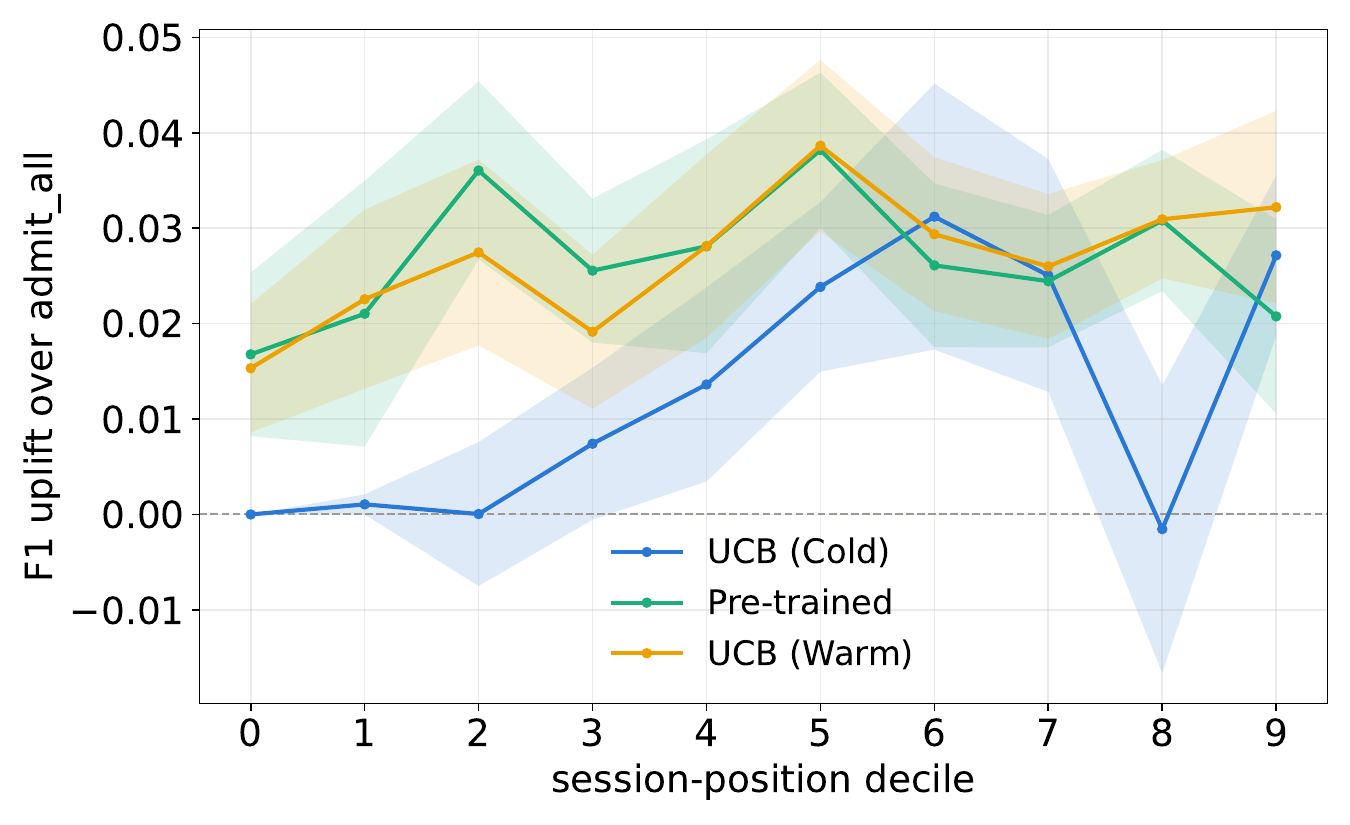}}
\subfigure[500]{\label{fig:short-500}\includegraphics[width=0.24\textwidth]{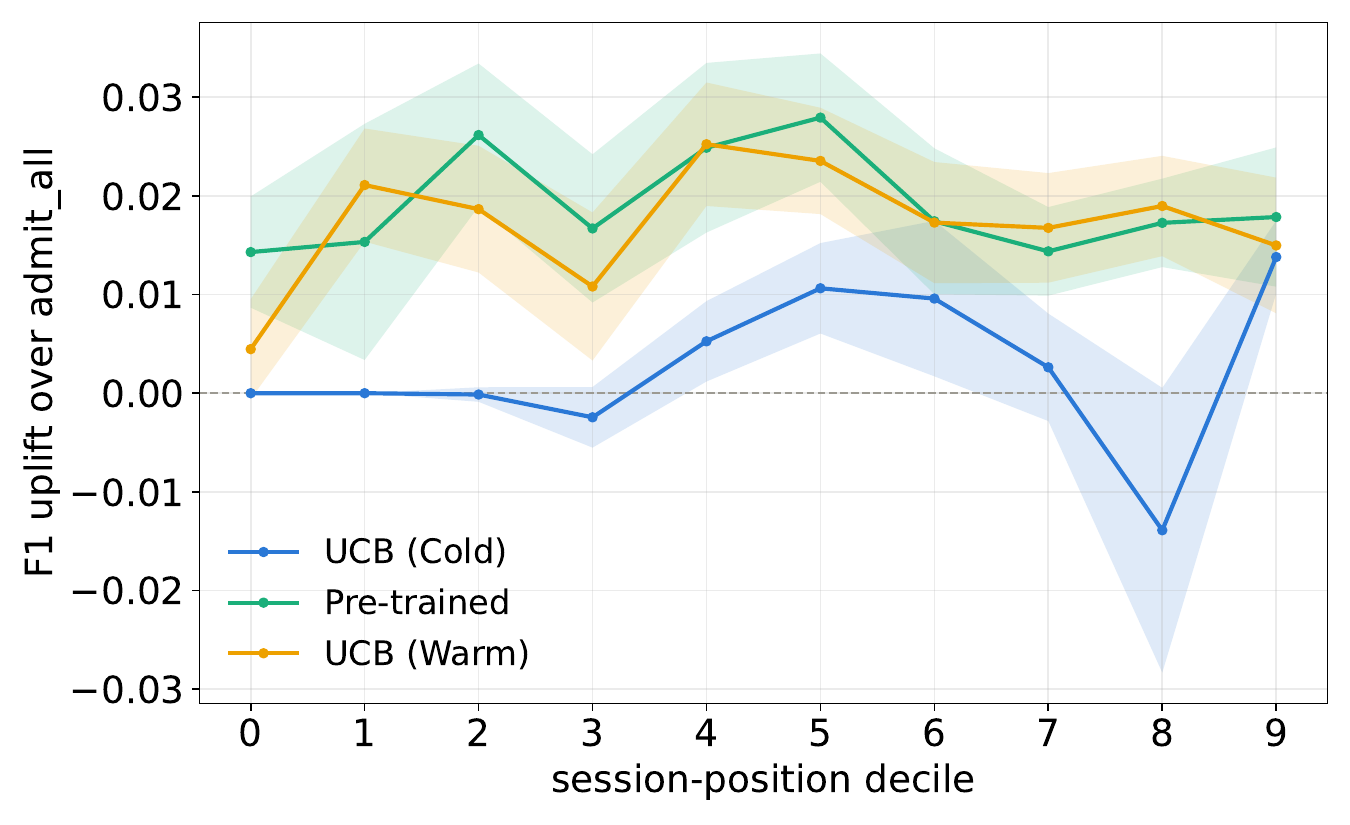}}
\caption{Learning dynamics on \datashort: F1 gain over the Universal baseline in each session-position decile under memory budgets $B \in \{60, 120, 250, 500\}$. Shaded regions denote the 95\% CI over all users.}
\label{fig:learning-curve-short}
\end{figure*}

\begin{figure*}[ht]
\centering   
\subfigure[60]{\label{fig:short-pt-60}\includegraphics[width=0.24\textwidth]{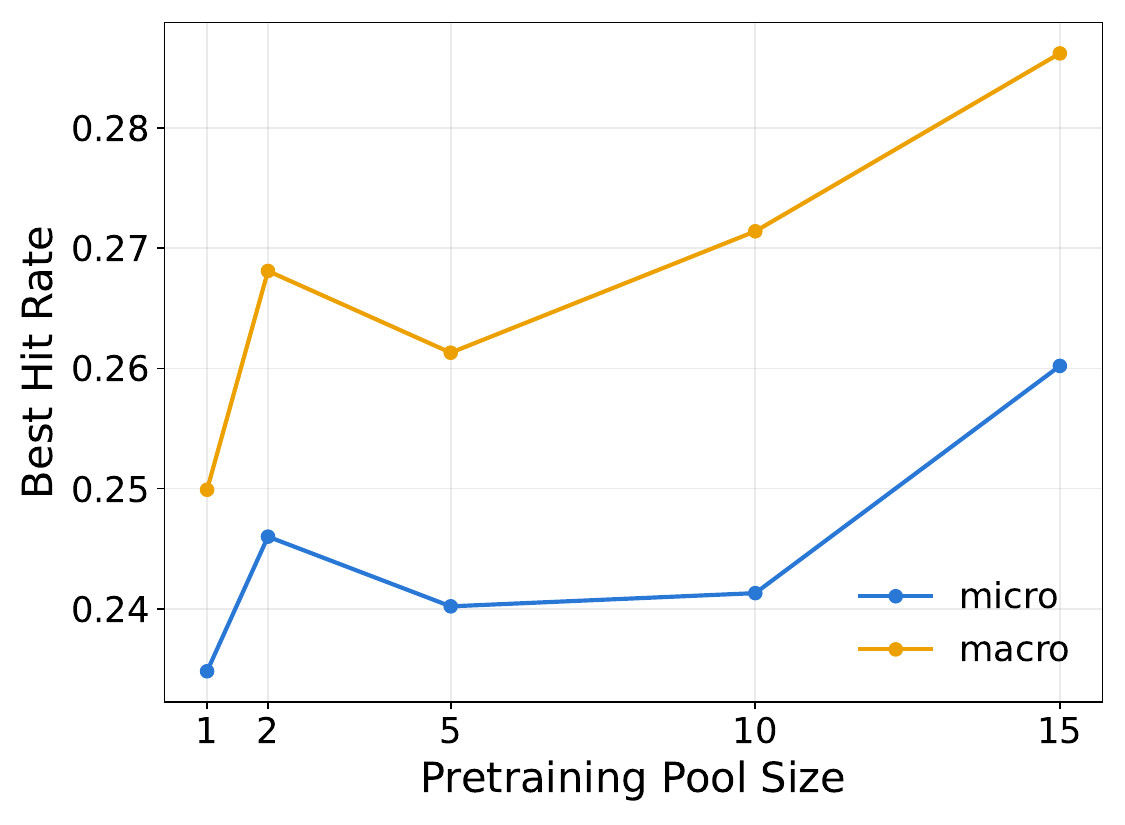}}
\subfigure[120]{\label{fig:short-pt-120}\includegraphics[width=0.24\textwidth]{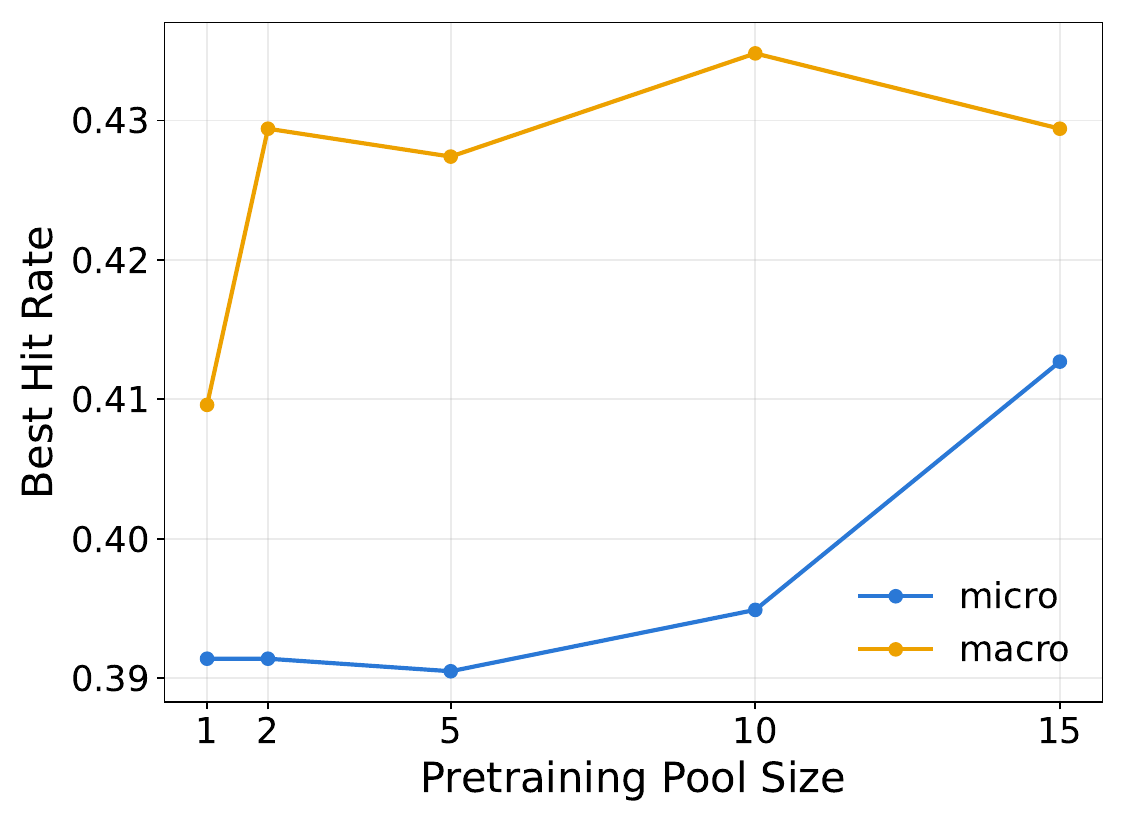}}
\subfigure[250]{\label{fig:short-pt-250}\includegraphics[width=0.24\textwidth]{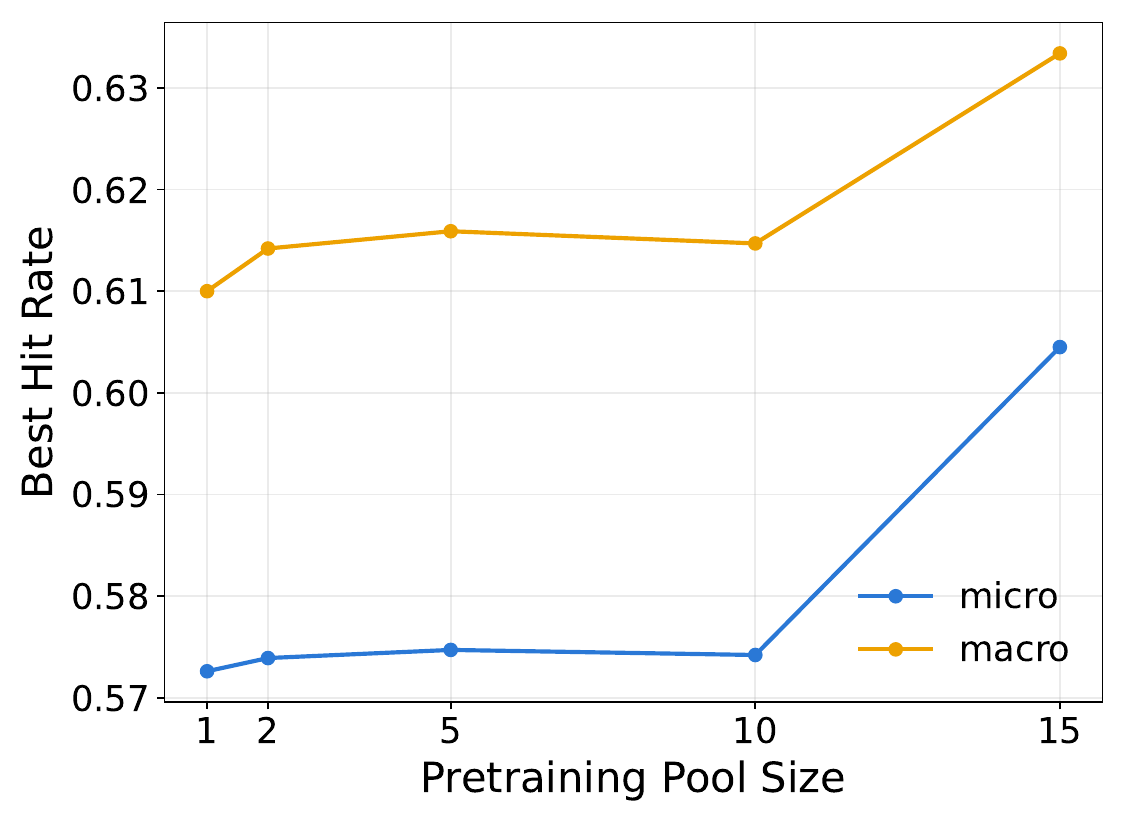}}
\subfigure[500]{\label{fig:short-pt-500}\includegraphics[width=0.24\textwidth]{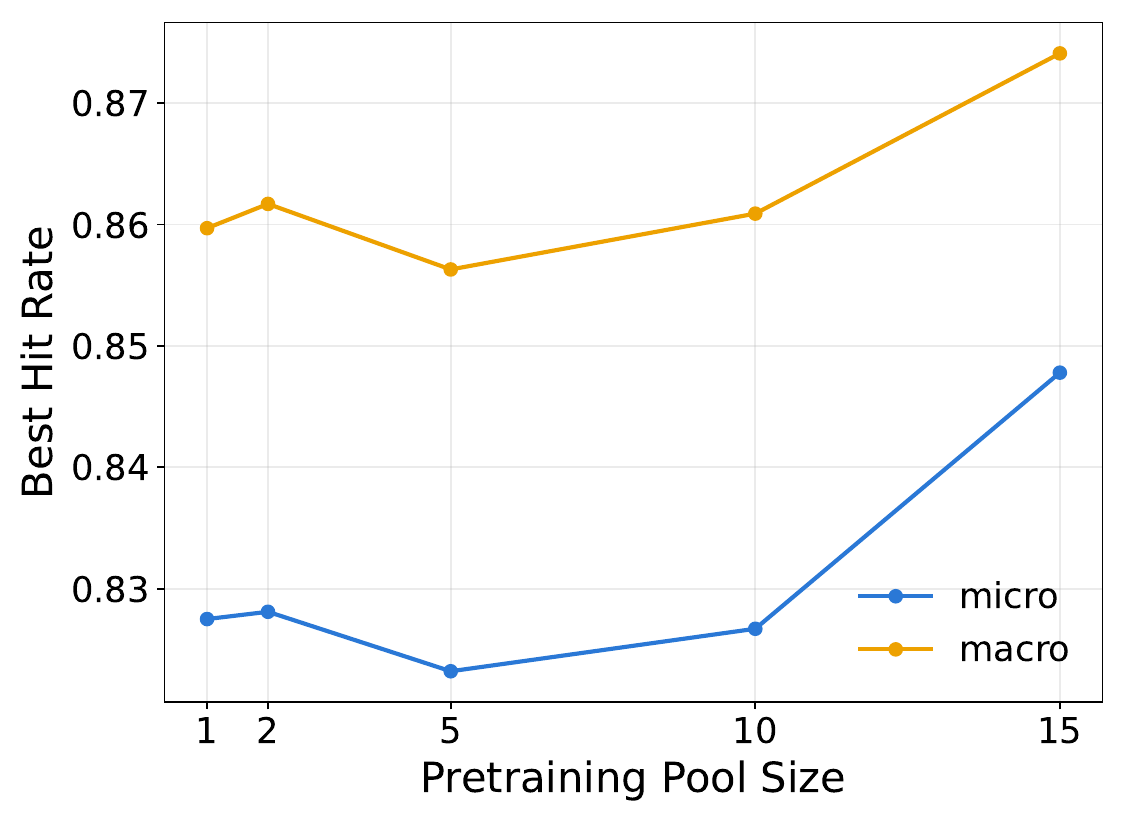}}
\caption{Effect of the pre-training pool size on \datashort: best micro- and macro-averaged RHRs under $B \in \{60, 120, 250, 500\}$.}
\label{fig:pretrain-size-curve-short-app}
\end{figure*}

On \datalong, the additional budgets reproduce the trends in Section~\ref{sec:ablation}. $\methodpretrained$ gives an immediate gain that shrinks in later deciles. Warm-Start UCB separates from it at later deciles. $\methodcoldstart$ remains close to the Universal baseline. RHRs also increase monotonically with the pool size under both budgets, improving by roughly $3$--$4$ points from one to fifteen pre-training users.
 
On \datashort, the same qualitative picture holds, and pre-training again gives a positive gain from the first decile. Warm-Start UCB and $\methodpretrained$ both outperform $\methodcoldstart$ for most of the interaction, and RHRs with fifteen pre-training users exceed those with a single user under every budget. The results are, however, noticeably noisier when the interaction history is short. In Figure~\ref{fig:learning-curve-short}, the confidence bands of Warm-Start UCB and $\methodpretrained$ largely overlap, and a clear separation between them appears only at $B = 60$. Cold-start UCB fluctuates strongly and drops sharply in the ninth decile under all four budgets before recovering. In Figure~\ref{fig:pretrain-size-curve-short-app}, performance does not increase monotonically with the pool size: it dips at a pool size of five for $B = 60$ and $B = 500$ and stays nearly flat between two and ten users for $B = 120$ and $B = 250$. This is because, with a substantially shorter interaction history, the online estimator does not collect sufficient feedback to converge, and the offline pre-training pool is also much smaller. We therefore regard \datalong as the more reliable testbed for studying learning dynamics, and we report it in the main text.
 

\section{Discussion of Assumption~\refwarmstartassump}
\label{app:discuss-assumption-pretrain}

In this section, we provide a two-step justification of Assumption~\ref{assumption:warm-start}. The first step controls the variation of the current user's parameter around a population center, while the second controls the error incurred when estimating this population center from other users' data. Let $\overline{\btheta}\coloneqq\EE_{\btheta^\star\sim\rho}[\btheta^\star]$ denote the population-level parameter. Suppose first that the current user's parameter is concentrated around $\overline{\btheta}$ in the geometry induced by $\bSigma_0$. In particular, assume that
\begin{align}
    \PP_{\btheta^\star\sim\rho}\Big(\|\btheta^\star-\overline{\btheta}\|_{\bSigma_0}\leq\epsilon_{\mathrm{pop}}\Big)\geq1-\delta_{\mathrm{pop}}.
    \label{eq:population-concentration-app}
\end{align}
The cross-user predictability observed in Section~\ref{sec:meta-analysis} provides empirical motivation for the existence of such a shared population component. At the same time, the gap between the within-user and leave-one-user-out performance suggests that $\epsilon_{\mathrm{pop}}$ need not be zero: it captures the irreducible heterogeneity between the current user and the population.


We next justify that the population center $\overline{\btheta}$ can be estimated accurately from other users' data. Suppose that the pretraining sample $\{(\xb_i,y_i)\}_{i=1}^n$ satisfies
\begin{align*}
    y_i=\langle\overline{\btheta},\xb_i\rangle+\xi_i, \qquad \EE[\xi_i\mid\xb_i]=0,
\end{align*}
where $\xi_i$ includes both observation noise and zero-mean variation among the pretraining users. Let
\begin{align*}
    \bSigma_x\coloneqq\EE[\xb\xb^\top], \qquad \widehat{\bSigma}_n\coloneqq\frac{1}{n}\sum_{i=1}^n\xb_i\xb_i^\top,
\end{align*}
and suppose $\bSigma_x\succeq\mu\Ib$ for some $\mu>0$. Consider the ridge estimator with regularization $\gamma_n$ to be determined later,
\begin{align*}
    \bmu_0\coloneqq\big(\widehat{\bSigma}_n+\gamma_n\Ib\big)^{-1}\frac{1}{n}\sum_{i=1}^n\xb_i y_i.
\end{align*}
Standard random-design ridge-regression results, such as \citet[Theorem~2]{hsu2014random}, imply that, when the sample size $n$ is sufficiently large\footnote{Roughly speaking, we require $n \gtrsim \gamma_n^{-1}\log(d/\delta_{\mathrm{est}})$. We refer to~\citet{hsu2014random} for the exact burn-in cost.}, with probability at least $1-\delta_{\mathrm{est}}$,
\begin{align}
    \|\bmu_0-\overline{\btheta}\|_{\bSigma_x}\lesssim\gamma_n\|\overline{\btheta}\|_{\bSigma_x^{-1}}+\sigma\sqrt{\frac{d_{\mathrm{eff}}(\gamma_n)+\log\big(1/\delta_{\mathrm{est}}\big)}{n}},
    \label{eq:pretraining-estimation}
\end{align}
up to lower-order terms arising from estimating the covariance matrix, where
\begin{align*}
    d_{\mathrm{eff}}(\gamma_n)\coloneqq\operatorname{tr}\Big(\bSigma_x\big(\bSigma_x+\gamma_n\Ib\big)^{-1}\Big)\leq d.
\end{align*}

For the idealized choice $\bSigma_0=\bSigma_x/\mu$, which satisfies $\bSigma_0\succeq\Ib$, \eqref{eq:pretraining-estimation} gives
\begin{align}
    \|\bmu_0-\overline{\btheta}\|_{\bSigma_0} \lesssim \frac{\gamma_n}{\sqrt\mu}\|\overline{\btheta}\|_{\bSigma_x^{-1}}+\frac{\sigma}{\sqrt\mu}\sqrt{\frac{d_{\mathrm{eff}}(\gamma_n)+\log\big(1/\delta_{\mathrm{est}}\big)}{n}} = \epsilon_{\mathrm{est}}.
    \label{eq:pretraining-estimation-sigma0}
\end{align}
with probability at least $1-\delta_{\mathrm{est}}$. More generally, the same conclusion holds when $\bSigma_0$ is an empirical covariance estimate that is spectrally equivalent to $\bSigma_x/\mu$. Moreover, when $\gamma_n$ is sufficiently small and $n$ sufficiently large, the ridge bias is dominated by the stochastic error, leading to a sufficiently small error $\epsilon_{\mathrm{est}} = \tilde{O} (\sigma \sqrt{d/(\mu n)})$.

On the intersection of the events in \eqref{eq:population-concentration-app} and \eqref{eq:pretraining-estimation-sigma0}, the triangle inequality gives
\begin{align*}
    \|\btheta^\star-\bmu_0\|_{\bSigma_0}&\leq\|\btheta^\star-\overline{\btheta}\|_{\bSigma_0}+\|\overline{\btheta}-\bmu_0\|_{\bSigma_0} \leq \epsilon_{\mathrm{pop}}+\epsilon_{\mathrm{est}}.
\end{align*}
Therefore, we recover Assumption~\ref{assumption:warm-start} by setting $\epsilon=\epsilon_{\mathrm{pop}}+\epsilon_{\mathrm{est}}$ and $\delta_0=\delta_{\mathrm{pop}}+\delta_{\mathrm{est}}$:
\begin{align*}
    \PP\Big(\|\btheta^\star-\bmu_0\|_{\bSigma_0}\leq\epsilon_{\mathrm{pop}}+\epsilon_{\mathrm{est}}\Big)\geq1-\delta_{\mathrm{pop}}-\delta_{\mathrm{est}}.
\end{align*}

\begin{remark}
    While the estimation component $\epsilon_{\mathrm{est}}$ decreases as the amount of pretraining data grows, the population component $\epsilon_{\mathrm{pop}}$ captures genuine user heterogeneity and generally does not vanish with $n$.
\end{remark}

\section{Missing Proof from Section~\refdelayedsec}

In this section, we provide the proof of Theorem~\ref{thm:main}. We first define the following filtration. For $t = 1,2, \dots, T$, we consider $\cF_t$ generated by
\begin{align*}
    \xb_1, a_1, y_1, D_1, \cdots, \xb_{t-1}, a_{t-1}, y_{t-1}, D_{t-1}, \xb_t, a_t
\end{align*}
where $D_s$ is defined as is if $a_s = \actionadd$ and otherwise $0$. For $t \geq 1$, we have $\xb_t \in \cF_{t}$, $y_t, D_t \in \cF_{t+1}$, and $\bar z_{s,t} = y_s \ind\{D_s \leq \min\{t-s,m\}\} \in \cF_{s+1}$.

\subsection{Proof of Optimism}

\begin{lemma}
\label{lem:delayed-optimistic-prior}
Let the warm-start event $\cE_0 \coloneqq \big\{\|\btheta^\star-\bmu_0\|_{\bSigma_0}\leq\epsilon\big\}$. Then conditional on $\cE_0$, with probability at least $1-\delta$, simultaneously for all $t\in[T]$,
\begin{align*}
    \|\widehat{\btheta}_t-\btheta_m^\star\|_{\Vb_t}\leq\alpha_t,
\end{align*}
where $\btheta_m^\star \coloneqq \kappa_m\btheta^\star$. Consequently, if $\PP(\cE_0)\geq1-\delta_0$, then the preceding event holds with probability at least $1-\delta_0-\delta$.

\end{lemma}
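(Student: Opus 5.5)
We would follow the decomposition of \citet{vernade2020linear}, adapted to the warm-start regularizer. Write $\bar z_{s,t} = \bar z_{s,s+m} - (\bar z_{s,s+m}-\bar z_{s,t})$. The second term vanishes unless $s\in\cP_t$. For $s\notin\cP_t$ we have $\bar z_{s,t}=\bar z_{s,s+m} = \langle\btheta_m^\star,\xb_s\rangle+\xi_s$, where $\xi_s := \bar z_{s,s+m}-\kappa_m\eta(\xb_s)$. Since $D_s$ is independent of everything else and $y_s\sim\mathrm{Bernoulli}(\eta(\xb_s))$, we get $\EE[\xi_s\mid\cF_s]=0$. Because $\xi_s\in[-1,1]$ (indeed in an interval of length $1$), it is conditionally $1/2$-sub-Gaussian. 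Let $\Wb_t:=\sum_{s\le t}\ind\{a_s=\actionadd\}$ index the added rounds. Using $\bb_t$ and $\Vb_t$, we obtain
\begin{align*}
\widehat\btheta_t-\btheta_m^\star = \Vb_t^{-1}\Big(\lambda\bSigma_0\kappa_m(\bmu_0-\btheta^\star) + \sum_{s\le t,\,a_s=\actionadd}\xb_s\xi_s - \sum_{s\in\cP_t}\xb_s(\bar z_{s,s+m}-\bar z_{s,t})\Big).
\end{align*}
This uses the identity $\lambda\bSigma_0\kappa_m\bmu_0 + \sum \xb_s\langle\btheta_m^\star,\xb_s\rangle = \Vb_t\btheta_m^\star + \lambda\bSigma_0\kappa_m(\bmu_0-\btheta^\star)$. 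Taking the $\Vb_t$-norm and applying the triangle inequality gives three terms, which we bound separately.

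\textbf{Bias term.} The bound is $\|\lambda\kappa_m\bSigma_0(\bmu_0-\btheta^\star)\|_{\Vb_t^{-1}}$. Since $\Vb_t\succeq\lambda\bSigma_0$, we have $\Vb_t^{-1}\preceq(\lambda\bSigma_0)^{-1}$. Hence the term is at most $\lambda\kappa_m\|\bSigma_0(\bmu_0-\btheta^\star)\|_{(\lambda\bSigma_0)^{-1}} = \kappa_m\sqrt\lambda\|\bmu_0-\btheta^\star\|_{\bSigma_0}\le\kappa_m\sqrt\lambda\epsilon$ on $\cE_0$.

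\textbf{Delay term.} Since $|\bar z_{s,s+m}-\bar z_{s,t}|\le1$, the triangle inequality gives $\sum_{s\in\cP_t}\|\xb_s\|_{\Vb_t^{-1}}$ directly.

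\textbf{Martingale term.} This is the main step and the main obstacle, because of measurability. The noise $\xi_s$ is revealed only at $s+m$, but the self-normalized bound of \citet{abbasi2011improved} needs the noise to be measurable with respect to the next element of the filtration while $\xb_s$ is predictable. The filtration defined above places $y_s,D_s\in\cF_{s+1}$, so $\xi_s$ is $\cF_{s+1}$-measurable and $\xb_s\ind\{a_s=\actionadd\}$ is $\cF_s$-measurable. Thus the "predictable covariate, adapted noise" structure holds even though the learner does not yet observe $\xi_s$. The learner's information at time $s$ is contained in $\cF_s$, so the conditional mean-zero property holds given $\cF_s$. We would apply the self-normalized bound with regularizer $\lambda\bSigma_0$ in place of $\lambda\Ib$. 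With probability $1-\delta$, simultaneously for all $t$, this gives
\begin{align*}
\Big\|\sum\xb_s\xi_s\Big\|_{\Vb_t^{-1}}\le\sigma\sqrt{2\log\big(\det(\Vb_t)^{1/2}\det(\lambda\bSigma_0)^{-1/2}/\delta\big)}.
\end{align*}
The determinant ratio is $\det(\Ib+\lambda^{-1}\bSigma_0^{-1/2}\sum\xb_s\xb_s^\top\bSigma_0^{-1/2})$. Because $\bSigma_0\succeq\Ib$, this is at most $\det(\Ib+\lambda^{-1}\sum\xb_s\xb_s^\top)\le(1+TL^2/(\lambda d))^d$ by AM--GM. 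The term is therefore at most $\sigma\sqrt{d\ell_\lambda+2\log(1/\delta)}$.

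Summing the three bounds gives exactly $\alpha_t$. The whole argument is carried out conditional on $\cE_0$, which depends only on $\btheta^\star$ and the pretraining data. These are independent of the online noise, so the martingale argument is valid under the conditional law. A union bound then yields probability at least $1-\delta_0-\delta$.
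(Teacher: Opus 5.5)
Your argument is correct and gives exactly $\alpha_t$, but your decomposition differs from the paper's.

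\textbf{The paper's route.} It splits the added rounds into completed ($s\le t-m$) and pending ones. It builds an auxiliary ridge estimator $\overline{\btheta}_t$ from the finalized labels only, with design matrix $\overline{\Vb}_t=\lambda\bSigma_0+\sum_{s\in\cC_t}\xb_s\xb_s^\top$, and applies the self-normalized bound to that estimator. It then transfers the bound to $\widehat{\btheta}_t$ using $\Vb_t\succeq\overline{\Vb}_t$, i.e.\ $\|\overline{\Vb}_t(\overline{\btheta}_t-\btheta_m^\star)\|_{\Vb_t^{-1}}\le\|\overline{\btheta}_t-\btheta_m^\star\|_{\overline{\Vb}_t}$. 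Each pending round contributes its full residual $\bar z_{s,t}-\langle\btheta_m^\star,\xb_s\rangle\in[-1,1]$.

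\textbf{Your route.} You follow \citet{vernade2020linear} and add and subtract the final truncated label $\bar z_{s,s+m}$. This places the noise of every added round, including pending rounds whose label is not yet observed, into a single martingale normalized directly by the algorithm's $\Vb_t$. The leftover correction is $\bar z_{s,s+m}-\bar z_{s,t}\in\{0,1\}$.

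\textbf{Comparison.} Your version avoids the auxiliary estimator and the matrix-comparison step. Its correction term is also nonzero only on the smaller event $\{y_s=1,\ t-s<D_s\le m\}$, which a refined analysis could exploit. The paper's version keeps the concentration step on finalized data only, at the cost of the extra transfer step. Both rely on the same filtration with $y_s,D_s\in\cF_{s+1}$, which you correctly identify as the crux.

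\textbf{A wording fix.} The online labels are not independent of $\btheta^\star$. What makes conditioning on $\cE_0$ harmless is the following: if $\btheta^\star,\bmu_0,\bSigma_0$ are placed in the initial $\sigma$-algebra, each $\xi_s$ is conditionally mean-zero and $1/2$-sub-Gaussian given $\cF_s$ for every realization. The self-normalized bound therefore holds conditionally on any such initial event.
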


\begin{proof}[Proof of Lemma~\ref{lem:delayed-optimistic-prior}]
Let $\cS_t \coloneqq \big\{s\leq t:a_s=\actionadd\big\}$ denote the set of rounds up to time $t$ in which the learner plays action $\actionadd$. We partition $\cS_t$ into the completed and pending sets
\begin{align*}
    \cC_t \coloneqq \big\{s\in\cS_t:s\leq t-m\big\}, \qquad \cP_t \coloneqq \big\{s\in\cS_t:t-m<s\leq t\big\}.
\end{align*}
For every $s\in\cC_t$, the $m$-round observation window has elapsed, and hence
\begin{align*}
    \bar z_{s,t}=y_s\ind\{D_s\leq m\}\eqqcolon z_s^{(m)}.
\end{align*}
Because $\PP(D_s\leq m)=\kappa_m$ and $D_s$ is conditionally independent of $y_s$ given $\xb_s$, we have
\begin{align*}
    \EE\big[z_s^{(m)}\mid\xb_s\big]=\kappa_m\EE\big[y_s\mid\xb_s\big]=\kappa_m\eta(\xb_s)=\langle\btheta_m^\star,\xb_s\rangle.
\end{align*}
Therefore, defining $\xi_s \coloneqq z_s^{(m)}-\langle\btheta_m^\star,\xb_s\rangle$, we obtain a conditionally mean-zero and $\sigma$-sub-Gaussian noise sequence. 
Now define the completed-sample design matrix and its corresponding ridge estimator
\begin{align*}
    \overline{\Vb}_t \coloneqq \lambda\bSigma_0+\sum_{s\in\cC_t}\xb_s\xb_s^\top, \quad \overline{\btheta}_t \coloneqq \overline{\Vb}_t^{-1}\Bigg(\lambda\bSigma_0(\kappa_m\bmu_0)+\sum_{s\in\cC_t}\xb_s z_s^{(m)}\Bigg).
\end{align*}
Subtracting $\btheta_m^\star$ gives
\begin{align*}
    \overline{\Vb}_t\big(\overline{\btheta}_t-\btheta_m^\star\big)=\lambda\bSigma_0\big(\kappa_m\bmu_0-\btheta_m^\star\big)+\sum_{s\in\cC_t}\xb_s\xi_s.
\end{align*}
Since $\btheta_m^\star=\kappa_m\btheta^\star$, the prior-mismatch term satisfies, on $\cE_0$,
\begin{align*}
    \|\kappa_m\bmu_0-\btheta_m^\star\|_{\lambda\bSigma_0}=\kappa_m\sqrt{\lambda}\|\bmu_0-\btheta^\star\|_{\bSigma_0}\leq\kappa_m\sqrt{\lambda}\epsilon.
\end{align*}

We next bound the determinant term in the self-normalized inequality. Since $\xb_s \in \cF_s$ and $\xi_s \in \cF_{s+1}$ is $\sigma$-sub-Gaussian with $\sigma=1/2$ condition on $\cF_{s}$, we can apply Lemma~\ref{lem:self-normalized}, which leads to that with probability at least $1 - \delta$, for all $t \geq 0$
\begin{align*}
    \Big\| \sum_{s\in\cC_t} \xb_s\xi_s \Big\|_{\overline{\Vb}_t^{-1}}^2 \le 2\sigma^2 \log \bigg(\frac{\det(\overline{\Vb}_t)^{1/2}\det(\lambda\bSigma_0)^{-1/2}}{\delta} \bigg)
\end{align*}
Since $\bSigma_0\succeq\Ib$ and $\|\xb_s\|_2\leq L$, Lemma~\ref{lem:log-det-prior} leads to 
\begin{align*}
    \log\Bigg(\frac{\det(\overline{\Vb}_t)}{\det(\lambda\bSigma_0)}\Bigg)\leq d\log\Big(1+\frac{|\cC_t|L^2}{\lambda d}\Big)\leq d\log\Big(1+\frac{TL^2}{\lambda d}\Big).
\end{align*}
Combining these estimation therefore implies that, with probability at least $1-\delta$, simultaneously for all $t\in[T]$,
\begin{align*}
\Big\|\sum_{s\in\cC_t}\xb_s\xi_s\Big\|_{\overline{\Vb}_t^{-1}} \leq \sigma \sqrt{d\log\Big(1+\frac{TL^2}{\lambda d}\Big) + 2\log\Big(\frac{1}{\delta}\Big)}.
\end{align*}
Combining the preceding inequality with the prior-mismatch bound yields
\begin{align*}
    \|\overline{\btheta}_t-\btheta_m^\star\|_{\overline{\Vb}_t}\leq\beta_{\lambda,\delta} := \sigma \sqrt{d\log\Big(1+\frac{TL^2}{\lambda d}\Big) + 2\log\Big(\frac{1}{\delta}\Big)} + \kappa_m\sqrt{\lambda}\epsilon
\end{align*}
simultaneously for all $t\in[T]$. It remains to translate $\overline{\btheta}_t$ to $\hat \btheta_t$. By the definition of Algorithm~\ref{alg:delayed-linear-prior}, we know that $\widehat{\btheta}_t=\Vb_t^{-1}\bb_t$, which leads to 
\begin{align*}
    \Vb_t (\hat{\btheta}_t - \btheta_m^\star) & = \bb_t-\Vb_t\btheta_m^\star \\
    & = \lambda\bSigma_0(\kappa_m\bmu_0) + \sum_{s \in \cC_t} \xb_s\bar z_{s,t} + \sum_{s \in \cP_t} \xb_s\bar z_{s,t}  - \Big( \lambda\bSigma_0+\sum_{s\in\cC_t}\xb_s\xb_s^\top+\sum_{s\in\cP_t}\xb_s\xb_s^\top \Big) \btheta_m^\star.
\end{align*}
Since $\overline{\btheta}_t = \overline{\Vb}_t^{-1} \big(\lambda\bSigma_0(\kappa_m\bmu_0) + \sum_{s \in \cC_t} \xb_s\bar z_{s,t} \big)$, we know that
\begin{align*}
    \Vb_t (\hat{\btheta}_t - \btheta_m^\star) & = \overline{\Vb}_t (\overline{\btheta}_t - \btheta_m^\star) + \sum_{s \in \cP_t} \xb_s\bar z_{s,t} - \Big( \sum_{s\in\cP_t}\xb_s\xb_s^\top \Big) \btheta_m^\star \\
    & = \overline{\Vb}_t (\overline{\btheta}_t - \btheta_m^\star) + \sum_{s\in\cP_t}\xb_s\big(\bar z_{s,t}-\langle\btheta_m^\star,\xb_s\rangle\big).
\end{align*}
Therefore, the triangle inequality gives
\begin{align*}
    \|\widehat{\btheta}_t-\btheta_m^\star\|_{\Vb_t}&=\|\bb_t-\Vb_t\btheta_m^\star\|_{\Vb_t^{-1}} \leq \|\overline{\Vb}_t\big(\overline{\btheta}_t-\btheta_m^\star\big)\|_{\Vb_t^{-1}}+\Big\|\sum_{s\in\cP_t}\xb_s\Big(\bar z_{s,t}-\langle\btheta_m^\star,\xb_s\rangle\Big)\Big\|_{\Vb_t^{-1}}.
\end{align*}
Because $\Vb_t\succeq\overline{\Vb}_t$, we have $\Vb_t^{-1}\preceq\overline{\Vb}_t^{-1}$. Consequently,
\begin{align*}
    \|\overline{\Vb}_t\big(\overline{\btheta}_t-\btheta_m^\star\big)\|_{\Vb_t^{-1}}^2&=\big(\overline{\btheta}_t-\btheta_m^\star\big)^\top\overline{\Vb}_t\Vb_t^{-1}\overline{\Vb}_t\big(\overline{\btheta}_t-\btheta_m^\star\big) \leq\|\overline{\btheta}_t-\btheta_m^\star\|_{\overline{\Vb}_t}^2 \leq\beta_{\lambda,\delta}^2.
\end{align*}
For each $s\in\cP_t$, both $\bar z_{s,t}$ and $\langle\btheta_m^\star,\xb_s\rangle=\kappa_m\eta(\xb_s)$ belong to $[0,1]$. Therefore,
\begin{align*}
    \big|\bar z_{s,t}-\langle\btheta_m^\star,\xb_s\rangle\big|\leq1.
\end{align*}
The triangle inequality then gives
\begin{align*}
    \Big\|\sum_{s\in\cP_t}\xb_s\Big(\bar z_{s,t}-\langle\btheta_m^\star,\xb_s\rangle\Big)\Big\|_{\Vb_t^{-1}}&\leq\sum_{s\in\cP_t}\|\xb_s\|_{\Vb_t^{-1}}\big|\bar z_{s,t}-\langle\btheta_m^\star,\xb_s\rangle\big| \leq\sum_{s\in\cP_t}\|\xb_s\|_{\Vb_t^{-1}}.
\end{align*}
Combining the last three displays, we conclude that, simultaneously for all $t\in[T]$,
\begin{align*}
    \|\widehat{\btheta}_t-\btheta_m^\star\|_{\Vb_t}\leq\beta_{\lambda,\delta}+\sum_{s\in\cP_t}\|\xb_s\|_{\Vb_t^{-1}}=\alpha_t.
\end{align*}
This proves the conditional statement. Finally, combining the online concentration event with $\PP(\cE_0)\geq1-\delta_0$ proves the unconditional probability bound $1-\delta_0-\delta$.
\end{proof}

\subsection{Main Theorem}

\begin{proof}[Proof of Theorem~\ref{thm:main}]
Throughout the proof, we work on the good event $\cE$ from Lemma~\ref{lem:delayed-optimistic-prior}. On this event, simultaneously for all $t\in[T]$,
\begin{align*}
    \|\widehat{\btheta}_t-\btheta_m^\star\|_{\Vb_t}\leq\alpha_t=\beta_{\lambda,\delta}+\sum_{s\in\cP_t}\|\xb_s\|_{\Vb_t^{-1}},
\end{align*}
where $\btheta_m^\star=\kappa_m\btheta^\star$. We first show that the index $\kappa_m^{-1}\widetilde\eta_t$ is optimistic for $\eta(\xb_t)$. Fix any $t\in[T]$. By Cauchy--Schwarz and the definition of $\cE$,
\begin{align*}
    \langle\widehat{\btheta}_{t-1}-\btheta_m^\star,\xb_t\rangle&\geq-\|\widehat{\btheta}_{t-1}-\btheta_m^\star\|_{\Vb_{t-1}}\|\xb_t\|_{\Vb_{t-1}^{-1}} \geq -\alpha_{t-1}\|\xb_t\|_{\Vb_{t-1}^{-1}}.
\end{align*}
Recalling the computation of $\tilde \eta_t$, it follows that
\begin{align*}
    \widetilde\eta_t&=\langle\widehat{\btheta}_{t-1},\xb_t\rangle+\alpha_{t-1}\|\xb_t\|_{\Vb_{t-1}^{-1}} \geq\langle\btheta_m^\star,\xb_t\rangle =\kappa_m\eta(\xb_t).
    \tag{$\star$}
\end{align*}

Suppose first that $\eta(\xb_t)\geq\tau$. By $(\star)$, $\widetilde\eta_t\geq\kappa_m\eta(\xb_t)\geq\kappa_m\tau$, so Algorithm~\ref{alg:delayed-linear-prior} chooses $a_t=\mathsf{add}$, which agrees with the oracle policy. Thus, the algorithm never incorrectly skips a context that the oracle would add. Therefore, the only case in which the algorithm incurs nonzero regret is
\begin{align*}
    \eta(\xb_t)<\tau, \qquad a_t=\actionadd.
\end{align*}
In this case, the decision rule implies $\widetilde\eta_t\geq\kappa_m\tau$, and the per-step regret is given by
\begin{align*}
    \EE_y\big[r(\xb_t,\actionskip,y)\big]-\EE_y\big[r(\xb_t,\actionadd,y)\big]=\tau-\eta(\xb_t) \leq \kappa^{-1}_m \tilde \eta_t - \eta(\xb_t)
\end{align*}
Moreover, we see that
\begin{align*}
    \tilde \eta_t - \kappa_m \eta(\xb_t) & = \langle\widehat{\btheta}_{t-1},\xb_t\rangle + \alpha_{t-1}\|\xb_t\|_{\Vb_{t-1}^{-1}} - \langle\btheta_m^\star,\xb_t\rangle \\
    & = \langle\widehat{\btheta}_{t-1}-\btheta_m^\star,\xb_t\rangle+\alpha_{t-1}\|\xb_t\|_{\Vb_{t-1}^{-1}}\\
    &\leq2\alpha_{t-1}\|\xb_t\|_{\Vb_{t-1}^{-1}}.
\end{align*}
Consequently,
\begin{align*}
    \EE_y\big[r(\xb_t,\actionskip,y)\big]-\EE_y\big[r(\xb_t,\actionadd, y)\big] \leq\frac{2}{\kappa_m}\alpha_{t-1}\|\xb_t\|_{\Vb_{t-1}^{-1}}\ind\{a_t=1\}.
\end{align*}
Now let $\cS_T\coloneqq\big\{t\in[T]:a_t= \actionadd \big\}$ denote the set of rounds with action $\actionadd$. Summing the preceding inequality gives
\begin{align*}
    \regret(T)\leq\frac{2}{\kappa_m}\sum_{t\in\cS_T}\alpha_{t-1}\|\xb_t\|_{\Vb_{t-1}^{-1}}.
\end{align*}
Substituting the definition of $\alpha_{t-1}$, we decompose the sum as
\begin{align*}
    \sum_{t\in\cS_T}\alpha_{t-1}\|\xb_t\|_{\Vb_{t-1}^{-1}}&=\underbrace{\beta_{\lambda,\delta}\sum_{t\in\cS_T}\|\xb_t\|_{\Vb_{t-1}^{-1}}}_{\mathrm{(I)}} + \underbrace{\sum_{t\in\cS_T}\Bigg(\sum_{s\in\cP_{t-1}}\|\xb_s\|_{\Vb_{t-1}^{-1}}\Bigg)\|\xb_t\|_{\Vb_{t-1}^{-1}}}_{\mathrm{(II)}}.
\end{align*}

We first bound $\mathrm{(I)}$. Since $\lambda\geq L^2$ and $\bSigma_0\succeq\Ib$, we see that $\|\xb_t\|_{\Vb_{t-1}^{-1}}^2\leq\ \|\xb_t\|_2^2/\lambda \leq 1$. This allows us to apply the elliptical-potential lemma (Lemma~\ref{lem:pot}) to the subsequence of added contexts, which yields
\begin{align*}
    \sum_{t\in\cS_T}\|\xb_t\|_{\Vb_{t-1}^{-1}}^2&\leq2\log\Bigg(\frac{\det(\Vb_T)}{\det(\lambda\bSigma_0)}\Bigg) \leq 2d\log\Big(1+\frac{TL^2}{\lambda d}\Big),
\end{align*}
where the last inequality holds due to Lemma~\ref{lem:log-det-prior}. Therefore, Cauchy--Schwarz implies
\begin{align*}
    \sum_{t\in\cS_T}\|\xb_t\|_{\Vb_{t-1}^{-1}}&\leq\sqrt{|\cS_T|\sum_{t\in\cS_T}\|\xb_t\|_{\Vb_{t-1}^{-1}}^2} \leq\sqrt{2dT\log\Big(1+\frac{TL^2}{\lambda d}\Big)}.
\end{align*}
It follows that
\begin{align*}
    \mathrm{(I)}\leq\beta_{\lambda,\delta}\sqrt{2dT\log\Big(1+\frac{TL^2}{\lambda d}\Big)}.
\end{align*}

We next bound $\mathrm{(II)}$. Applying $ab\leq(a^2+b^2)/2$ termwise yields
\begin{align*}
    \mathrm{(II)}&\leq\frac{1}{2}\sum_{t\in\cS_T}\sum_{s\in\cP_{t-1}}\Big(\|\xb_s\|_{\Vb_{t-1}^{-1}}^2+\|\xb_t\|_{\Vb_{t-1}^{-1}}^2\Big) \eqqcolon\frac{1}{2}\big(\mathrm{(II)}_a+\mathrm{(II)}_b\big).
\end{align*}
For every $t$, the pending set $\cP_{t-1}$ contains at most $m$ elements. Hence,
\begin{align*}
    \mathrm{(II)}_b&\leq m\sum_{t\in\cS_T}\|\xb_t\|_{\Vb_{t-1}^{-1}}^2 \leq 2md\log\Big(1+\frac{TL^2}{\lambda d}\Big).
\end{align*}

To bound $\mathrm{(II)}_a$, we exchange the order of summation. Fix a round $s\in\cS_T$ such that $a_s = \actionadd$. The term involving $\xb_s$ appears only for rounds $t$ satisfying $s\in\cP_{t-1}$ and $a_t = \actionadd$, which requires $t\in\{s+1,\ldots,s+m\}$. Thus, it appears at most $m$ times. Moreover, for every such $t$, we have $t-1\geq s$ and therefore
\begin{align*}
    \Vb_{t-1}\succeq\Vb_s\succeq\Vb_{s-1} \, \Rightarrow \,  \|\xb_s\|_{\Vb_{t-1}^{-1}}^2\leq\|\xb_s\|_{\Vb_{s-1}^{-1}}^2.
\end{align*}
This immediately implies that
\begin{align*}
    \mathrm{(II)}_a &=\sum_{t\in\cS_T}\sum_{s\in\cP_{t-1}}\|\xb_s\|_{\Vb_{t-1}^{-1}}^2 \leq m\sum_{s\in\cS_T}\|\xb_s\|_{\Vb_{s-1}^{-1}}^2 \leq 2md\log\Big(1+\frac{TL^2}{\lambda d}\Big).
\end{align*}
Combining the bounds for $\mathrm{(II)}_a$ and $\mathrm{(II)}_b$ gives
\begin{align*}
    \mathrm{(II)}\leq2md\log\Big(1+\frac{TL^2}{\lambda d}\Big).
\end{align*}

Finally, combining the bounds for $\mathrm{(I)}$ and $\mathrm{(II)}$, we conclude that, on $\cE$,
\begin{align*}
    \regret(T)\leq\frac{2}{\kappa_m}\Bigg[\beta_{\lambda,\delta}\sqrt{2dT\log\Big(1+\frac{TL^2}{\lambda d}\Big)}+2md\log\Big(1+\frac{TL^2}{\lambda d}\Big)\Bigg].
\end{align*}
By Lemma~\ref{lem:delayed-optimistic-prior}, the event $\cE$ holds with probability at least $1-\delta$ conditional on the warm-start event $\cE_0$. Since $\PP(\cE_0)\geq1-\delta_0$, the claimed regret bound holds with probability at least $1-\delta_0-\delta$.

It remains to pick a proper $\lambda$. For notational convenience, define $\ell_\lambda\coloneqq \log\big(1+TL^2/(\lambda d)\big)$ and $q\coloneqq\log (1/\delta)$. We start with
\begin{align}
    \regret(T)\lesssim\inf_{\lambda\geq L^2}\Bigg\{\frac{1}{\kappa_m}\Big(\sigma\sqrt{d\ell_\lambda+2q}+\kappa_m\sqrt{\lambda}\epsilon\Big)\sqrt{Td\ell_\lambda}+\frac{md}{\kappa_m}\ell_\lambda\Bigg\}. \label{eq:final-regret-w-lambda}
\end{align}

We first derive the warm-start bound. Using $\log(1+u)\leq u$ gives $\ell_\lambda \leq TL^2/(\lambda d)$. Consequently, the warm-start contribution satisfies
\begin{align*}
    \sqrt{\lambda}\epsilon\sqrt{Td\ell_\lambda}\leq\sqrt{\lambda}\epsilon\sqrt{\frac{T^2L^2}{\lambda}}=LT\epsilon.
\end{align*}
Meanwhile, as $\lambda\to\infty$, we see that $\ell_\lambda \to 0$; thus, the remaining term vanishes:
\begin{align*}
    \frac{\sigma}{\kappa_m}\sqrt{d\ell_\lambda+2q}\sqrt{Td\ell_\lambda}\longrightarrow0, \qquad \frac{md}{\kappa_m}\ell_\lambda\longrightarrow0.
\end{align*}
Taking the infimum over $\lambda$ therefore yields
\begin{align*}
    \regret(T)\lesssim LT\epsilon = O(T\epsilon).
\end{align*}

We next derive the learning-based bound. Suppose $\epsilon>0$ and choose
\begin{align*}
    \lambda_{\mathrm{learn}}\coloneqq\max\Bigg\{L^2,\frac{d}{\kappa_m^2\epsilon^2}\Bigg\}.
\end{align*}
When $\epsilon$ is sufficiently small such that $\epsilon\lesssim\sqrt{d}/(\kappa_mL)$, this reduces to $\lambda_{\mathrm{learn}}=d\kappa_m^{-2}\epsilon^{-2}$. For this choice, we see that $\kappa_m\sqrt{\lambda_{\mathrm{learn}}}\epsilon=\sqrt d$ and $\ell_{\lambda_{\mathrm{learn}}}=\log\big(1+ \kappa_m^2\epsilon^2TL^2/d^2 \big) = \Tilde{O}(1)$.
Substituting this choice into~\eqref{eq:final-regret-w-lambda} gives
\begin{align*}
    \regret(T)&\lesssim\frac{\sigma}{\kappa_m}\sqrt{d\ell_{\lambda_{\mathrm{learn}}}+2q}\sqrt{Td\ell_{\lambda_{\mathrm{learn}}}}+\frac{d\sqrt{T\ell_{\lambda_{\mathrm{learn}}}}}{\kappa_m}+\frac{md}{\kappa_m}\ell_{\lambda_{\mathrm{learn}}}\\
    &\leq\widetilde O\Bigg(\frac{d\sqrt T+md}{\kappa_m}\Bigg), 
\end{align*}
where the last inequality suppresses logarithmic factors in $T$, $d$, $1/\delta$, and $1/\epsilon$. Combining this bound with the warm-start bound yields
\begin{align}
    \regret(T)\leq\widetilde O\Bigg(\min\Bigg\{T\epsilon,\frac{d\sqrt T+md}{\kappa_m}\Bigg\}\Bigg) \label{eq:final-two-phase-regret}
\end{align}
which gives the desired bound.

Finally, we characterize the phase transition so that $\lambda$ can be selected in a principled way. The boundary between the two bounds is determined by $T\epsilon\asymp d \kappa_m^{-1}\big(\sqrt T+m\big)$. Letting $u=\sqrt T$, the boundary equation becomes $\kappa_m\epsilon u^2=du+dm$. Solving this equation with respect to $u$ gives its positive solution
\begin{align*}
    u=\frac{d+\sqrt{d^2+4\kappa_m\epsilon dm}}{2\kappa_m\epsilon}.
\end{align*}
Therefore,
\begin{align*}
    T_{\mathrm{crit}}&\asymp\Bigg(\frac{d}{\kappa_m\epsilon}+\sqrt{\frac{dm}{\kappa_m\epsilon}}\Bigg)^2 \asymp \frac{d^2}{\kappa_m^2\epsilon^2}+\frac{dm}{\kappa_m\epsilon}.
\end{align*}
Therefore, setting $\lambda \to \infty$ when $T \leq T_{\mathrm{crit}}$ and $\lambda = d\kappa_m^{-2}\epsilon^{-2}$ leads to the rate in~\eqref{eq:final-two-phase-regret}.
\end{proof}


\section{Supplementary Lemmas}

\begin{lemma}[Lemma 11, \citealt{abbasi2011improved}]
\label{lem:pot}
Let $\xb_1,\ldots,\xb_t\in\RR^d$, and for $1\le s\le t$ define
\begin{align*}
\bar\Vb_s \;=\; \Vb + \sum_{r=1}^s \xb_r\xb_r^\top,
\end{align*}
where $\Vb\in\RR^{d\times d}$ is a (deterministic) positive definite matrix. Then
\begin{align*}
\log\left(\frac{\det(\bar\Vb_t)}{\det(\Vb)}\right) \;\le\; \sum_{s=1}^t \|\xb_s\|^2_{\bar\Vb_{s-1}^{-1}}.
\end{align*}
If, in addition, $\|\xb_s\|_2 \le L$ for all $s=1,\ldots,t$, then
\begin{align*}
\sum_{s=1}^t \min\left\{1,\, \|\xb_s\|^2_{\bar\Vb_{s-1}^{-1}}\right\} \;\le\; 2\log\left(\frac{\det(\bar\Vb_t)}{\det(\Vb)}\right).
\end{align*}
\end{lemma}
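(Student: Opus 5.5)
The plan is to reduce both inequalities to a single exact identity for the log-determinant ratio, and then apply two elementary scalar inequalities termwise. Write $w_s \coloneqq \|\xb_s\|^2_{\bar\Vb_{s-1}^{-1}}\ge 0$, with the convention $\bar\Vb_0=\Vb$. Every $\bar\Vb_{s}$ is positive definite, since it is $\Vb\succ 0$ plus positive semidefinite rank-one terms. So all inverses and determinants below are well defined and positive.

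\emph{Step 1 (exact telescoping identity).} Since $\bar\Vb_s=\bar\Vb_{s-1}+\xb_s\xb_s^\top$, I factor $\bar\Vb_s=\bar\Vb_{s-1}^{1/2}\big(\Ib+\bar\Vb_{s-1}^{-1/2}\xb_s\xb_s^\top\bar\Vb_{s-1}^{-1/2}\big)\bar\Vb_{s-1}^{1/2}$. The matrix in the middle has eigenvalue $1+w_s$ in the direction $\bar\Vb_{s-1}^{-1/2}\xb_s$ and eigenvalue $1$ on its orthogonal complement. This is the matrix determinant lemma, and it gives $\det(\bar\Vb_s)=\det(\bar\Vb_{s-1})(1+w_s)$. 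Taking logarithms and telescoping over $s=1,\ldots,t$ yields
\begin{align*}
\log\left(\frac{\det(\bar\Vb_t)}{\det(\Vb)}\right)=\sum_{s=1}^t\log(1+w_s).
\end{align*}

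\emph{Step 2 (first inequality).} Since $\log(1+u)\le u$ for all $u\ge 0$, the identity immediately gives $\log(\det(\bar\Vb_t)/\det(\Vb))\le\sum_{s}w_s$.

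\emph{Step 3 (second inequality).} It suffices to prove the scalar bound $\min\{1,u\}\le 2\log(1+u)$ for every $u\ge0$, and then sum it against the identity of Step 1. I split into two cases.
\begin{itemize}
\item For $u\in[0,1]$: the function $\log(1+u)$ is concave and vanishes at $0$, so it lies above its chord on $[0,1]$. Hence $\log(1+u)\ge u\log 2\ge u/2$.
\item For $u>1$: monotonicity gives $2\log(1+u)\ge 2\log 2>1=\min\{1,u\}$.
\end{itemize}

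The boundedness $\|\xb_s\|_2\le L$ is not actually needed for this inequality; it only matters downstream, where one combines the lemma with Lemma~\ref{lem:log-det-prior} to bound the log-determinant by $d\log(1+tL^2/(\lambda d))$. There is no real obstacle here. The only point requiring care is the rank-one determinant identity in Step 1, which I would verify by the eigenvalue argument above rather than citing it.
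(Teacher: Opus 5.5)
Your proof is correct and follows the standard argument behind Lemma~11 of \citet{abbasi2011improved}, which the paper cites rather than reproves: the rank-one determinant identity telescopes to $\log(\det(\bar\Vb_t)/\det(\Vb))=\sum_{s}\log(1+w_s)$, and then $\log(1+u)\le u$ and $\min\{1,u\}\le 2\log(1+u)$ give the two inequalities. You are also right that the bound $\|\xb_s\|_2\le L$ is not needed for the second inequality; it only matters when the log-determinant is further controlled via Lemma~\ref{lem:log-det-prior}.
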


\begin{lemma}[Theorem 1, \citealt{abbasi2011improved}]
\label{lem:self-normalized}
Let $\{\mathcal F_t\}_{t=0}^\infty$ be a filtration. Let $\{\eta_t\}_{t=1}^\infty$ be a real-valued stochastic process such that $\eta_t$ is $\mathcal F_t$-measurable and, for some $\sigma\ge 0$, $\eta_t$ is conditionally $\sigma$-sub-Gaussian given $\mathcal F_{t-1}$, i.e.
\begin{align*}
\forall \lambda\in\RR,\qquad \EE\big[e^{\lambda\eta_t}\mid\mathcal F_{t-1}\big] \;\le\; \exp\!\Big(\frac{\lambda^2\sigma^2}{2}\Big) \qquad \text{a.s.}
\end{align*}
Let $\{\xb_t\}_{t=1}^\infty$ be an $\RR^d$-valued stochastic process such that $\xb_t$ is $\mathcal F_{t-1}$-measurable (i.e.\ $\xb_t$ may depend on the past but not on $\eta_t$). Let $\Vb\in\RR^{d\times d}$ be any deterministic positive definite matrix. For any $t\ge 0$, define
\begin{align*}
\bar\Vb_t \;=\; \Vb + \sum_{s=1}^t \xb_s\xb_s^\top, \qquad
S_t \;=\; \sum_{s=1}^t \eta_s\xb_s.
\end{align*}
Then, for any $\delta\in(0,1)$, with probability at least $1-\delta$, for all $t\ge 0$ simultaneously,
\begin{align*}
\big\|S_t\big\|_{\bar\Vb_t^{-1}}^2 \;\le\; 2\sigma^2\log\left(\frac{\det(\bar\Vb_t)^{1/2}\det(\Vb)^{-1/2}}{\delta}\right).
\end{align*}
\end{lemma}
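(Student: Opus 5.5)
The plan is the method of mixtures, with a stopping-time argument to make the bound uniform in $t$. Write $\Vb_t^{\mathrm{emp}} \coloneqq \sum_{s=1}^t \xb_s\xb_s^\top$, so that $\bar\Vb_t = \Vb + \Vb_t^{\mathrm{emp}}$. For a fixed vector $\bw\in\RR^d$, define
\begin{align*}
    M_t^{\bw} \coloneqq \exp\Big( \tfrac{1}{\sigma}\langle \bw, S_t\rangle - \tfrac{1}{2}\|\bw\|^2_{\Vb_t^{\mathrm{emp}}} \Big), \qquad M_0^{\bw} = 1 .
\end{align*}
If $\sigma=0$ then $\eta_t=0$ almost surely and the claim is trivial, so assume $\sigma>0$.

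\textbf{Step 1: each $M_t^{\bw}$ is a supermartingale.} The one-step ratio is
\begin{align*}
    \frac{M_t^{\bw}}{M_{t-1}^{\bw}} = \exp\Big(\tfrac{1}{\sigma}\eta_t\langle\bw,\xb_t\rangle - \tfrac12\langle\bw,\xb_t\rangle^2\Big).
\end{align*}
Since $\xb_t$ is $\cF_{t-1}$-measurable, we may apply the conditional sub-Gaussian condition with the choice $\lambda = \langle \bw,\xb_t\rangle/\sigma$. This gives $\EE[M_t^{\bw}/M_{t-1}^{\bw}\mid\cF_{t-1}]\le 1$. Hence $(M_t^{\bw})_{t\ge0}$ is a nonnegative supermartingale with $\EE[M_t^{\bw}]\le 1$.

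\textbf{Step 2: mix over $\bw$ and compute the mixture in closed form.} Let $h$ be the density of $\mathcal N(0,\Vb^{-1})$; this is legitimate because $\Vb$ is deterministic and positive definite. Set $M_t \coloneqq \int M_t^{\bw}\,h(\bw)\,d\bw$. By Tonelli's theorem, $M_t$ is again a nonnegative supermartingale with $\EE[M_t]\le 1$.

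Completing the square in the Gaussian integral, the exponent is
\begin{align*}
    \tfrac{1}{\sigma}\langle\bw,S_t\rangle - \tfrac12\|\bw\|^2_{\bar\Vb_t}
    = \tfrac{1}{2\sigma^2}\|S_t\|^2_{\bar\Vb_t^{-1}} - \tfrac12\big\|\bw-\bar\Vb_t^{-1}S_t/\sigma\big\|^2_{\bar\Vb_t}.
\end{align*}
Integrating against $h$ then yields
\begin{align*}
    M_t = \Big(\frac{\det(\Vb)}{\det(\bar\Vb_t)}\Big)^{1/2}\exp\Big(\frac{1}{2\sigma^2}\|S_t\|^2_{\bar\Vb_t^{-1}}\Big).
\end{align*}
As a result, the event $M_t\ge 1/\delta$ coincides exactly with the failure of the claimed inequality at time $t$.

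\textbf{Step 3: uniformity over $t$ via a stopping time.} For a fixed $t$, Markov's inequality already gives $\PP(M_t\ge1/\delta)\le\delta$. To obtain the bound simultaneously for all $t$, I would define the stopping time
\begin{align*}
    \tau \coloneqq \min\{t\ge0: M_t\ge 1/\delta\}, \qquad \tau=\infty \text{ if no such } t .
\end{align*}
By the supermartingale convergence theorem, the nonnegative supermartingale $M_t$ converges almost surely to a limit $M_\infty$, so $M_\tau$ is well defined even on $\{\tau=\infty\}$. By optional stopping, $\EE[M_{\tau\wedge n}]\le1$ for every $n$. Fatou's lemma then gives $\EE[M_\tau]\le\liminf_{n}\EE[M_{\tau\wedge n}]\le 1$. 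Therefore
\begin{align*}
    \PP(\tau<\infty)\le \PP(M_\tau\ge 1/\delta)\le \delta
\end{align*}
by Markov's inequality, which is the uniform statement.

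\textbf{Main obstacle.} The one delicate step is Step 3. It requires handling $\tau=\infty$ through the almost-sure limit $M_\infty$, and applying Fatou's lemma rather than a bounded optional stopping theorem. Steps 1 and 2 are direct computations; the only care needed there is to apply the sub-Gaussian bound conditionally on $\cF_{t-1}$ with the $\cF_{t-1}$-measurable choice of $\lambda$.
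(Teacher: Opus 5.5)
Your proposal is correct and is the standard method-of-mixtures argument of \citet{abbasi2011improved}, which the paper cites rather than reproves. That argument has three parts: a fixed-direction exponential supermartingale, Gaussian mixing with covariance $\Vb^{-1}$ giving the closed form $\big(\det(\Vb)/\det(\bar\Vb_t)\big)^{1/2}\exp\big(\|S_t\|^2_{\bar\Vb_t^{-1}}/(2\sigma^2)\big)$, and a stopping-time/Fatou argument for uniformity in $t$. The differences are cosmetic and do not affect the conclusion: the original runs the stopping argument on each fixed-direction supermartingale before mixing, and the failure event is the strict inequality, which is only contained in $\{M_t\ge 1/\delta\}$ rather than equal to it.
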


The following lemma provides the calculation of the log-determinant term appearing in Lemma~\ref{lem:pot} and Lemma~\ref{lem:self-normalized}.

\begin{lemma}\label{lem:log-det-prior}
Let $\bSigma_0\succeq\lambda\Ib\succ0$, and let $\xb_1,\ldots,\xb_T\in\RR^d$ satisfy $\|\xb_t\|_2\leq L$ for every $t\in[T]$. Define $\bSigma_t\coloneqq\bSigma_0+\sum_{s=1}^t\xb_s\xb_s^\top$. Then, for every $t\in[T]$,
\begin{align*}
    \frac{\det(\bSigma_t)}{\det(\bSigma_0)}\leq\Big(1+\frac{tL^2}{d\lambda}\Big)^d\leq\Big(1+\frac{TL^2}{d\lambda}\Big)^d.
\end{align*}
\end{lemma}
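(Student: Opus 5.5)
The plan is to factor out $\bSigma_0$ and reduce the claim to an AM--GM bound on the eigenvalues of a positive semidefinite perturbation of the identity. Since $\bSigma_0\succeq\lambda\Ib\succ0$, $\bSigma_0$ is invertible with a symmetric square root, and we can write
\begin{align*}
    \bSigma_t=\bSigma_0^{1/2}\Big(\Ib+\sum_{s=1}^t\bSigma_0^{-1/2}\xb_s\xb_s^\top\bSigma_0^{-1/2}\Big)\bSigma_0^{1/2}.
\end{align*}
Consequently $\det(\bSigma_t)/\det(\bSigma_0)=\det(\Ib+\Mb)$, where $\Mb\coloneqq\sum_{s=1}^t\bSigma_0^{-1/2}\xb_s\xb_s^\top\bSigma_0^{-1/2}$ is positive semidefinite.

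Let $\mu_1,\ldots,\mu_d\geq0$ be the eigenvalues of $\Mb$. Then $\det(\Ib+\Mb)=\prod_{i=1}^d(1+\mu_i)$. By the AM--GM inequality,
\begin{align*}
    \prod_{i=1}^d(1+\mu_i)\leq\Big(\frac{1}{d}\sum_{i=1}^d(1+\mu_i)\Big)^d=\Big(1+\frac{\operatorname{tr}(\Mb)}{d}\Big)^d.
\end{align*}

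It remains to bound the trace. By cyclicity,
\begin{align*}
    \operatorname{tr}(\Mb)=\sum_{s=1}^t\xb_s^\top\bSigma_0^{-1}\xb_s=\sum_{s=1}^t\|\xb_s\|_{\bSigma_0^{-1}}^2.
\end{align*}
Since $\bSigma_0\succeq\lambda\Ib$ implies $\bSigma_0^{-1}\preceq\lambda^{-1}\Ib$ (operator monotonicity of inversion), each term satisfies $\|\xb_s\|_{\bSigma_0^{-1}}^2\leq\|\xb_s\|_2^2/\lambda\leq L^2/\lambda$. Hence $\operatorname{tr}(\Mb)\leq tL^2/\lambda$, which gives the first inequality. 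The second inequality is immediate from $t\leq T$ and monotonicity of $u\mapsto(1+u)^d$ for $u\geq0$.

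No step is a real obstacle. The only points needing care are justifying $\bSigma_0^{-1}\preceq\lambda^{-1}\Ib$ and the factorization through $\bSigma_0^{1/2}$; both follow from $\bSigma_0$ being symmetric positive definite.
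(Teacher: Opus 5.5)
Your proof is correct and follows the paper's argument essentially step for step. You use the same whitened matrix $\Mb=\bSigma_0^{-1/2}\big(\sum_{s\le t}\xb_s\xb_s^\top\big)\bSigma_0^{-1/2}$, the identity $\det(\bSigma_t)/\det(\bSigma_0)=\det(\Ib+\Mb)$, AM--GM on the eigenvalues, and the trace bound $\operatorname{tr}(\Mb)\le tL^2/\lambda$ obtained from $\bSigma_0^{-1}\preceq\lambda^{-1}\Ib$.
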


\begin{proof}
Define
\begin{align*}
    \Mb_t\coloneqq\bSigma_0^{-1/2}\Bigg(\sum_{s=1}^t\xb_s\xb_s^\top\Bigg)\bSigma_0^{-1/2}\succeq0.
\end{align*}
Since $\bSigma_0\succeq\lambda\Ib$, we have $\bSigma_0^{-1}\preceq\lambda^{-1}\Ib$, and hence
\begin{align*}
    \operatorname{tr}(\Mb_t)&=\sum_{s=1}^t\xb_s^\top\bSigma_0^{-1}\xb_s \leq\frac{1}{\lambda}\sum_{s=1}^t\|\xb_s\|_2^2 \leq\frac{tL^2}{\lambda} \leq\frac{TL^2}{\lambda}.
\end{align*}
Let $\mu_1,\ldots,\mu_d$ be the eigenvalues of $\Mb_t$, then we see that
\begin{align*}
    \frac{\det(\bSigma_t)}{\det(\bSigma_0)}&=\det(\Ib+\Mb_t) = \prod_{i=1}^d(1+\mu_i).
\end{align*}
Applying AM--GM inequality,
\begin{align*}
    \frac{\det(\bSigma_t)}{\det(\bSigma_0)} = \prod_{i=1}^d(1+\mu_i) \leq\bigg(\frac{1}{d}\sum_{i=1}^d(1+\mu_i)\bigg)^d =\Big(1+\frac{\operatorname{tr}(\Mb_t)}{d}\Big)^d \leq\Big(1+\frac{TL^2}{d\lambda}\Big)^d,
\end{align*}
which finishes the proof.
\end{proof}

\end{document}